\documentclass{article}
\usepackage[main,final]{neurips_2026}

\usepackage[utf8]{inputenc} % allow utf-8 input
\usepackage[T1]{fontenc}    % use 8-bit T1 fonts
\usepackage{hyperref}       % hyperlinks
\usepackage{url}            % simple URL typesetting
\usepackage{booktabs}       % professional-quality tables
\usepackage{amsmath}        % math
\usepackage{amssymb}        % blackboard symbols
\usepackage{amsthm}         % theorem environments
\usepackage{amsfonts}       % blackboard math symbols
\usepackage{mathtools}      % math improvements
\usepackage{nicefrac}       % compact symbols for 1/2, etc.
\usepackage{microtype}      % microtypography
\usepackage{xcolor}         % colors
\usepackage{bm}             % bold math
\usepackage[textsize=footnotesize,colorinlistoftodos]{todonotes} % margin TODOs
\usepackage[capitalise,noabbrev]{cleveref} % smart references (after hyperref)

\usepackage{siunitx}    
\usepackage{colortbl}   
\usepackage{graphicx}
\usepackage{mathtools}
\newcommand{\Xspace}{\mathcal{X}}                  % input space
\newcommand{\Yspace}{\mathcal{Y}}                  % target space
\newcommand{\Data}{\mathcal{D}}                    % training dataset
\newcommand{\loss}{\ell}                           % prediction loss
\newcommand{\Reals}{\mathbb{R}}
\newcommand{\Rpos}{\mathbb{R}_{\geq 0}}

\newcommand{\learnedH}{H}                          % learned predictor (Bayesian, marginalizes theta)
\newcommand{\bayesH}{H^{*}}                        % Bayes-optimal under known theta
\newcommand{\selector}{C}                          % reject-option selector

\newcommand{\ptrue}{p^{*}}                         % true data-generating distribution
\newcommand{\post}{p(\theta \mid \Data)}           % posterior over theta
\newcommand{\predDist}[2]{p(#1 \mid #2)}           % generic conditional
\newcommand{\idlikelihood}{p_{\text{id}}(x)}       % in distribution likelihood
\newcommand{\oodlikelihood}{p_{\text{ood}}(x)}      % out of distribution likelihood

\newcommand{\Tstar}{T^{*}_{\ell}}                         % total
\newcommand{\Astar}[1][\ell]{A^{*}_{#1}}                         % aleatoric
\newcommand{\Estar}[1][\ell]{E^{*}_{#1}}                         % epistemic / regret

\newcommand{\Ahat}[1][\ell]{\hat A_{#1}}                         % aleatoric
\newcommand{\Ehat}[1][\ell]{\hat E_{#1}}                         % epistemic / regret

\newcommand{\coverage}{\rho}                          % coverage of selector
\newcommand{\Risk}{Ri}                              % expected risk
\newcommand{\Regret}{Re}                            % expected regret
\newcommand{\sRisk}{SRi}                              % expected risk
\newcommand{\sRegret}{SRe}                            % expected regret
\newcommand{\AuReC}{\mathrm{AuReC}}                % area under regret-coverage
\newcommand{\AuRC}{\mathrm{AuRC}}                  % area under risk-coverage
\newcommand{\AuROC}{\mathrm{AuROC}}                  % area under receiver-operating-characteristic
\newcommand{\AuAC}{\mathrm{AuAC}}
\DeclareMathOperator*{\E}{\mathbb{E}}

\DeclareMathOperator*{\argmin}{arg\,min}

\theoremstyle{plain}
\newtheorem{theorem}{Theorem}

\theoremstyle{definition}
\newtheorem{definition}{Definition}

\theoremstyle{remark}

\newcommand{\eqdef}{\triangleq}                    % equal by definition
\DeclarePairedDelimiter{\set}{\{}{\}}
\newcommand{\indicator}[1]{\mathbf{1}\!\set{#1}}

\title{Evaluating Epistemic Uncertainty:\\ Beyond OOD Detection and Active Learning}

\author{%
  Jakub~Paplhám\\
  Czech Technical University in Prague\\
  \texttt{paplhjak@fel.cvut.cz} \\
   \AND
   Willem~Waegeman\\
   Ghent University \\
   \And
   Eyke~Hüllermeier \\
   LMU Munich, MCML, DFKI \\
   \And
   Vojtěch~Franc\\
  Czech Technical University in Prague\\
}

\begin{document}

\maketitle

\begin{abstract}
Current evaluation of epistemic uncertainty relies on tasks such as out-of-distribution detection and active learning. However, the Bayes-optimal decision strategies for these tasks do not coincide with the scores commonly used to quantify epistemic uncertainty. Building on the epistemic reject-option framework, we evaluate epistemic uncertainty using its ability to identify regret, the reducible error. Formulating selective prediction as a constrained optimization over coverage, expected risk, and regret, we prove the optimal selector is a thresholded convex combination of the ground-truth aleatoric and epistemic uncertainties. This theoretical unification exposes a weakness in recent uncertainty disentanglement literature: we demonstrate that standard correlation metrics between learned components do not necessarily predict their actual operational utility. We instead propose to evaluate the achievable \textit{risk, regret, coverage} surface of the decomposition as a diagnostic for joint disentanglement and utility. Benchmarking standard methods on datasets with dense human annotations reveals that decision-theoretic rankings can disagree substantially with proxy-task rankings, including pairwise rank inversions between methods that are top-ranked on one criterion and bottom-ranked on other.

\end{abstract}
\section{Introduction}
The primary goal of uncertainty quantification and disentanglement is to isolate epistemic uncertainty (model ignorance) from aleatoric uncertainty (irreducible data noise). Successfully isolating these components should allow practitioners to take better-informed actions, such as deciding whether to abstain from a risky prediction or to acquire new labels. However, the field often leaves the exact operational goal of epistemic uncertainty vaguely defined. Same as \citep{hofman2026onesize}, we argue that to rigorously evaluate the uncertainty estimate, researchers must explicitly define the downstream action it is intended to inform, rather than relying on ambiguous notions of ignorance.

In practice, many standard methods for quantifying epistemic uncertainty, such as mutual information or the variance of predictive means, mathematically capture the reducible error, the regret. Yet, because true regret is uncomputable in standard single-label empirical settings, the literature predominantly relies on out-of-distribution (OOD) detection and active learning as surrogate evaluation tasks. This raises a question: does this theoretical disconnect cause issues?

We demonstrate that the theoretical Bayes-optimal solutions for these surrogate tasks do not align with the epistemic uncertainty estimators they are meant to evaluate. Under standard formulations, OOD detection is optimally solved by the input likelihood ratio $\nicefrac{\oodlikelihood}{\idlikelihood}$, which is entirely independent of the predictive task loss. Similarly, active learning objectives are optimally solved by expected error reduction, which answers how a sample globally affects the error of a newly trained model. In contrast, epistemic uncertainty in terms of the regret locally describes the reducible error for a specific observation. As we demonstrate in \cref{sec:task_misalignment}, the Bayes-optimal scorers for OOD, active learning, and regret-minimization can confidently reject entirely different regions of the input space.

This paper formalizes the theory of minimizing the excess risk in the Bayesian setup and evaluates epistemic uncertainty under this view in the frequentist setup, making \emph{four primary contributions}:

\emph{First}, we unify standard selective classification (often used to evaluate total uncertainty) and the epistemic reject-option into a single constrained-optimization framework. We prove that the optimal selector for any coverage, risk, and regret constraint is always a threshold on a convex combination of the true aleatoric ($\Astar[]$) and epistemic ($\Estar[]$) components.

\emph{Second}, we construct a mathematically tractable one-dimensional setting, inspired by the Rossellini function \citep{rossellini2024integrating}, in which the Bayes-optimal scorers for OOD detection, active learning, and regret-minimization reject distinct regions of the input space. This establishes that empirical disagreement between method rankings across tasks is not an artifact of poor training.

\emph{Third}, we address recent critiques regarding uncertainty disentanglement, which argue that high rank correlation between learned components ({\small $\Ahat[]$, $\Ehat[]$}) indicates a failure of uncertainty separation \citep{mucsanyi2024benchmarking}. We show that rank correlation fails to predict functional failure. Instead, we propose the distance to the Pareto-optimal surface of our unified framework as a diagnostic.

\emph{Finally}, we benchmark a representative suite of uncertainty quantification methods on datasets containing explicit human-annotator distributions $\ptrue(y \mid x)$. Because true frequentist regret is uncomputable in standard settings, utilizing datasets with dense human annotations (e.g., CIFAR-10H \citep{peterson2019human}, APPA-REAL \citep{agustsson2017appareal}) are strictly necessary for exact regret evaluation. We demonstrate that method rankings under pure regret and the Pareto-surface diagnostic disagree with proxy task rankings, confirming that the theoretical misalignment translates into different conclusions about which uncertainty estimators are actually useful.
\section{Related Work}
\label{sec:related}

\paragraph{Reject-option prediction.}
\Citet{chow1970optimum} established that the optimum binary classifier with a reject option thresholds the conditional posterior. Selective classification adapts this to deep learning \citep{geifman2017selective}, evaluating total predictive uncertainty through risk-coverage curves. Recent work unifies selective classification with OOD detection \citep{sircscod2023, franc2024scod}, characterises optimal strategies \citep{franc2023optimal, cortes2016learning}, and surveys the broader landscape \citep{hendrickx2024machine}. \Citet{traub2024overcoming} catalogue common evaluation flaws in these systems. Most relevant to our work, \citet{franc2025epistemic} extend the reject option to epistemic uncertainty, defining the predictor that abstains when the expected gap to the Bayes-optimal predictor is large. We build directly upon \citet{franc2025epistemic} by unifying their framework with selective classification and expanding it to evaluate the uncertainty decomposition jointly.

\paragraph{Aleatoric/epistemic decompositions.}
A growing body of work defines epistemic uncertainty as excess risk or reducible error. \Citet{kendall2017uncertainties} popularised the aleatoric/epistemic distinction in deep learning; \citet{hullermeier2021aleatoric} survey the formalisations. \Citet{depeweg2018decomposition} decompose Bayesian neural network predictive uncertainty via mutual information. \Citet{lahlou2023deup} estimate epistemic uncertainty as excess risk, explicitly capturing misspecification. Bregman-divergence and proper-scoring-rule decompositions appear in recent literature \citep{kotelevskii2025from, hofman2024quantifying, gruber2023uncertainty}. While \citet{wimmer2023quantifying} show that entropy and mutual-information decompositions violate certain natural axioms, the decision-theoretic decomposition that we use sidesteps these concerns by deriving directly from the deployment loss.

\paragraph{Uncomputability of regret.}
Standard benchmarks provide single labels, leading to theoretical impossibility results showing that faithful epistemic representation cannot be strictly incentivised or verified without strong assumptions \citep{bengs2023second}. To bypass this, \citet{jimenez2026position} evaluate epistemic uncertainty by simulating data-generating processes; however, their evaluation remains mostly qualitative and synthetic.

\paragraph{Disentanglement.}
Recent literature has begun questioning the quality of learned uncertainty components. \Citet{mucsanyi2024benchmarking} report high rank correlation ($\rho_{S} \geq 0.78$) between learned aleatoric and epistemic components on standard benchmarks, arguing this indicates an "entanglement" failure. \Citet{dejong2026measuring} instead argue for orthogonality and consistency. Our framework disputes the reliance on correlation, showing it fails to capture the operational utility of the components.

\section{Preliminaries: Reject-Option and Uncertainty}
\label{sec:prelims}

\paragraph{Bayesian setup.}
Let $\Xspace$ be the input space, $\Yspace$ the target space, and $\Data \in (\Xspace \times \Yspace)^N$ a training set. We assume a parametric generative model $\predDist{x, y}{\theta} = p(x)\, \predDist{y}{x, \theta}$ with parameters $\theta \in \Theta$. Let $\loss \colon \Yspace \times \Yspace \to \Rpos$ be a prediction loss. The Bayes-optimal predictor under known $\theta$ is $h(x, \theta) \eqdef \argmin_{\hat{y}}\, \E_{y \sim \predDist{y}{x, \theta}}[\loss(\hat{y}, y)]$. Under finite $\Data$, the Bayesian predictor minimises expected loss under the posterior predictive $p(y|x,\Data)\eqdef \mathbb{E}_{\theta\sim p(\theta|\Data)}[p(y|x,\theta)]$.  The corresponding predictor is $\learnedH(x, \Data) = \argmin_{\hat{y}} \E_{y \sim \predDist{y}{x, \Data}}[\loss(\hat{y}, y)]$, but in principle $\learnedH(x, \Data)$ can be any model trained on $\Data$.

\begin{definition}[Uncertainty components]
\label{def:uncertainty}
For $x \in \Xspace$ and $\Data \in (\Xspace \times \Yspace)^N$, we define
\begin{align}
    \Tstar(x, \Data) &\eqdef \E_{(\theta,y)}\big[\loss(\learnedH(x, \Data), y)\big]
        && \text{(total)} \\
    \Astar(x) &\eqdef \E_{(\theta,y)}\big[\loss(h(x, \theta), y)\big]
        && \text{(aleatoric)} \\
    \Estar(x, \Data) &\eqdef \E_{(\theta,y)}[\loss(\learnedH(x, \Data), y) - \loss(h(x, \theta), y)]
        && \text{(regret / epistemic)}
\end{align}
where the joint expectation runs over $(\theta, y) \sim \predDist{\theta, y}{x, \Data}$. By construction, $\Tstar = \Astar + \Estar$.
\end{definition}

\paragraph{Selectors and evaluation metrics.}
We work in the fixed-dataset regime: $\Data$ is given, so $\Astar$ and $\Estar$ are deterministic functions of $x$. A selector $\selector \colon \Xspace \to [0, 1]$ outputs the probability of accepting input $x$. Three evaluation metrics over the true marginal distribution of the input space $\ptrue(x)$, describe the overall performance of the selector:
\begin{align}
    &\text{Coverage} &\coverage(\selector) &= \E_{x \sim \ptrue}\big[\selector(x)\big], \label{eq:coverage}\\
    &\text{Risk} &\Risk(\selector) &= \E_{x \sim \ptrue}\big[\selector(x)\, \Tstar(x, \Data)\big], \label{eq:risk}\\
    &\text{Regret} &\Regret(\selector) &= \E_{x \sim \ptrue}\big[\selector(x)\, \Estar(x, \Data)\big]. \label{eq:regret}
\end{align}
These metrics capture the fundamental trade-offs of model deployment: Coverage measures the proportion of data the model accepts, Risk measures the expected total loss incurred on those accepted inputs, and Regret isolates the expected reducible error caused strictly by the model imperfections.

\paragraph{Remarks.} While the risk and regret can be defined as ratios normalized by the coverage, we use non-normalized expectations to ensure valid aggregation across operating points \citep{traub2024overcoming}. Note that \Cref{thm:unified} remains valid under either definition, since both definitions yield the same optimal solution set, see Appendix.

\subsection{The evaluation disconnect: Bayesian vs. frequentist}
While the definitions above form a cohesive theoretical framework, evaluating epistemic uncertainty in practice reveals a fundamental disconnect between the Bayesian and frequentist paradigms.
    
\textbf{The Bayesian disconnect.} In the Bayesian setup, epistemic uncertainty is formalized exactly as the expected regret over the posterior $\post$. This provides a principled and computable score (e.g., mutual information or the variance of the predictive means). However, the theoretical optimality of this score holds only \textit{in expectation} over the parameter space \citep{franc2025epistemic}. When evaluating a model empirically on a real-world task, we operate under a single, realized ground-truth $\theta^*$. A rejection strategy that minimizes regret on average across the posterior may misalign with the actual reducible error under the specific $\theta^*$ we encounter.
    
\textbf{The frequentist paradox.} In the frequentist setup, the objective perfectly aligns with practical deployment: we care about the true conditional regret under the single true distribution $\ptrue(y \mid x)$. However, this creates an operational paradox. To compute the true regret of a learned predictor, one must know the Bayes-optimal predictor. But if the optimal predictor is known, estimating epistemic uncertainty to reject predictions is unnecessary; one would simply deploy the optimal predictor, reducing regret to zero. 
    
\textbf{The proxy bottleneck.} Because true frequentist regret is uncomputable at inference time in standard settings, practitioners are forced to rely on empirical proxy scores. The challenge then becomes how to rigorously evaluate and rank these proxies. While recent work approximates true regret by simulating the data-generating process \citep{jimenez2026position}, the evaluation remains qualitative.
    
\textbf{Our evaluation framework.} We bypass this paradox by treating evaluation strictly as a benchmarking procedure. Our framework utilizes real-world datasets with explicitly known $\ptrue(y \mid x)$ distributions derived from dense human annotations. Access to $\ptrue(y \mid x)$ gives us the oracle $\bayesH(x)$, allowing us to compute the \textit{exact} true $\Regret$ of any proxy score. Consequently, we can benchmark any uncertainty estimator against the true frequentist regret using the $\AuReC$ metric \citep{franc2025epistemic}.
\section{Task Misalignment}
\label{sec:task_misalignment}

\subsection{Divergent Bayes-optimal scorers}
OOD detection and active learning serve as imperfect proxies for epistemic uncertainty because their Bayes-optimal scorers are mathematically distinct from the regret-minimising selector. 

\begin{itemize}
\item \textbf{OOD detection.} The Bayes-optimal scorer is the likelihood ratio $s_{\mathrm{OOD}}^*(x) = \nicefrac{\oodlikelihood}{\idlikelihood}$. It depends exclusively on the marginal input densities, completely independent of the predictive loss $\loss$ and the conditional distribution $\ptrue(y \mid x)$. \citep{li2025position} reach a related but distinct conclusion: supervised classifiers cannot reliably implement this likelihood-ratio scorer at all, since their predictive uncertainty conflates label and OOD uncertainty

\item \textbf{Bayesian Active Learning by Disagreement (BALD).} Under the information-gain objective \citep{houlsby2011bald}, the optimal acquisition score is the mutual information $s_{\mathrm{BALD}}^*(x) = \E_{\theta}[\mathrm{KL}(\predDist{y}{x, \theta} \| \predDist{y}{x, \Data})]$. This exactly equals the Bayesian expected regret $\Estar$ \emph{only when the deployment loss is the negative log-likelihood}, i.e. $\Estar[\text{log}]$. Under any loss-model mismatch, BALD diverges from regret.

\item \textbf{Expected Error Reduction (EER).} The optimal active learning acquisition score \citep{roy2001toward} is $s_{\mathrm{EER},\loss}^*(x) = \E_{y^*}[\int \Estar(x_{\mathrm{te}}, \Data \cup \{(x, y^*)\}) p_{\mathrm{te}}(x_{\mathrm{te}}) \,dx_{\mathrm{te}}]$. EER is a global concept evaluating the \textit{post-update} posterior, not the pointwise current $\Estar$. Inputs with high $\Estar$ but small $p_{\mathrm{te}}$ can yield near-zero EER, and vice versa.
\end{itemize}

The theoretical optimal scorers for likelihood ratio, BALD, EER, and expected regret coincide only in special cases. Therefore, a method optimized under one criterion may fail under another.

\subsection{A closed-form illustration of scorer disagreement}
\label{sec:sandbox}

To demonstrate this misalignment, we construct a tractable 1-D regression sandbox, inspired by the \textit{Rossellini function} $y=\sin(\nicefrac{0.2}{(x+0.16)^3}) + \epsilon$, \citep{jimenez2026position, rossellini2024integrating}. The environment is designed such that the true function, the noise profile, and the data sampling distribution are all explicitly known. We fit a linear regression over Gaussian RBF features to the Rossellini function, and use the result as our groundtruth reference function $f^*$, granting us well-specified Bayesian setup with access to the true posterior and closed-form $\Tstar$, $\Astar$, $\Estar$, and the corresponding optimal proxy scorers.

By construction, the input space contains distinct regimes: regions with fast-oscillating target functions but low aleatoric noise, regions with high aleatoric noise but slow oscillations, and regions with artificially suppressed training data density (OOD). 

\begin{figure}[t]
    \centering
    \includegraphics[width=\textwidth]{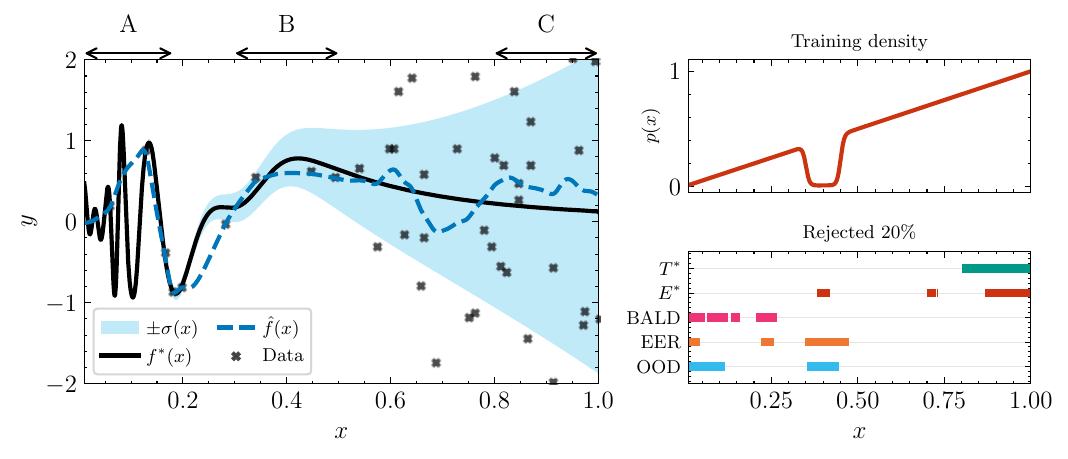}
    \caption{\textbf{Sandbox: Rejection regions of five distinct Bayes-optimal scores at $\mathbf{80\%}$ coverage.}\newline \textbf{Left:} A closed-form Bayesian regression setup featuring a true oscillating function ($f^*$), heteroscedastic noise ($\pm\sigma(x)$), and discrete training samples. We annotate three regions of the x-axis: (A) a fast-oscillating target with low noise, (B) an artificially sparse data well, and (C) a region of high aleatoric noise. \textbf{Top Right:} The training density $p(x)$. \textbf{Bottom Right:} The subsets of the input space each scorer chooses to reject. No single scorer prioritises the same regions. Overall, the optimal scorers fundamentally disagree on where the model is uncertain under squared error loss.}
    \label{fig:sandbox}
\end{figure}

As visualised in \cref{fig:sandbox}, evaluating the exact Bayes-optimal scorers for OOD detection, EER, BALD, risk-coverage ($\Tstar$), and regret-coverage ($\Estar$) at a fixed $80\%$ coverage budget results in vastly different rejection behaviors. The scorers prioritize different subsets of the input space. This closed-form example confirms that evaluating an uncertainty method via OOD detection or active learning is not merely a noisy approximation of regret; it actively optimizes for rejecting a different subset of the data. The risk-coverage and regret-coverage rules, while distinct here, are not unrelated; rather, they form the boundary cases of a single constrained family, which we formalize next.
\section{A Unified Constrained-Optimization Framework}
\label{sec:framework}

Having established that proxy tasks fail to align with pure epistemic regret, a secondary evaluation challenge arises: how should we evaluate models that output \textit{both} aleatoric ($\Ahat$) and epistemic ($\Ehat$) components? Recent literature attempts to evaluate the quality of these joint representations by measuring their rank correlation, arguing that high correlation indicates a failure of uncertainty disentanglement \citep{mucsanyi2024benchmarking}. We provide a complementary diagnostic by measuring the operational utility of the scores. As we demonstrate empirically in \cref{sec:experiments}, highly correlated components can still combine to form close-to-optimal rejection strategies. To properly evaluate this joint operational utility rather than mere statistical independence, we must first mathematically define how risk and regret trade off against each other under an optimal decision rule.

\subsection{Three constrained problems, one selector}

Consider three constrained-optimization problems over selectors $\selector \colon \Xspace \to [0,1]$, subject to budgets $\rho, \gamma > 0$ and a coverage floor $\kappa \in (0,1)$:
\begin{align}
    (\mathrm{P}_{\coverage}):\quad &
        \max_{\selector}\; \coverage(\selector)
        \;\;\text{s.t.}\;\; \Risk(\selector) \leq \rho,\; \Regret(\selector) \leq \gamma,
        \label{eq:Pc}\\
    (\mathrm{P}_{\Risk}):\quad &
        \min_{\selector}\; \Risk(\selector)
        \;\;\text{s.t.}\;\; \coverage(\selector) \geq \kappa,\; \Regret(\selector) \leq \gamma,
        \label{eq:PR}\\
    (\mathrm{P}_{\Regret}):\quad &
        \min_{\selector}\; \Regret(\selector)
        \;\;\text{s.t.}\;\; \coverage(\selector) \geq \kappa,\; \Risk(\selector) \leq \rho.
        \label{eq:PG}
\end{align}

\begin{theorem}[Unified      selector]
\label{thm:unified}
Assume $\Tstar(\cdot, \Data), \Estar(\cdot, \Data) \in L^{1}(\ptrue)$, and that for every $\lambda \in [\nicefrac{1}{2}, 1]$ the score $(1-\lambda)\Astar + \lambda \Estar$ has a continuous distribution under $x \sim \ptrue$. For any feasible instance of \eqref{eq:Pc}, \eqref{eq:PR}, or \eqref{eq:PG}, there exist $\lambda \in [\nicefrac{1}{2}, 1]$ and $\tau \in \Reals$ such that the optimal selector is:
\begin{equation}
    \selector^{*}(x) \;=\; \indicator{(1-\lambda)\,\Astar(x) + \lambda\,\Estar(x, \Data) \;\leq\; \tau}.
    \label{eq:unified-selector}
\end{equation}
The endpoints recover the standard formulations: $\lambda = \nicefrac{1}{2}$ thresholds total uncertainty $\Tstar = \Astar + \Estar$ (selective prediction \citep{chow1970optimum, geifman2017selective}), while $\lambda = 1$ thresholds epistemic uncertainty $\Estar$ alone (epistemic reject-option \citep{franc2025epistemic}).
\end{theorem}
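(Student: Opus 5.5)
The plan is to treat all three problems as linear programs over the convex set of measurable selectors $\selector\colon\Xspace\to[0,1]$, and then to identify their optimizers through Lagrangian duality. Every functional involved, namely $\coverage$, $\Risk$ and $\Regret$, is linear in $\selector$, and each is finite because of the $L^1$ assumption. For $(\mathrm{P}_{\coverage})$ we introduce multipliers $\alpha,\beta\ge 0$ for the risk and regret constraints and maximize pointwise
\begin{equation*}
\E_{x\sim\ptrue}\big[\selector(x)\,\big(1-\alpha\Tstar(x,\Data)-\beta\Estar(x,\Data)\big)\big].
\end{equation*}
Substituting $\Tstar=\Astar+\Estar$, the integrand becomes $1-\alpha\Astar-(\alpha+\beta)\Estar$. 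We normalize by $\alpha+(\alpha+\beta)$, which is legitimate when it is positive; the degenerate case is discussed below. This gives the score $(1-\lambda)\Astar+\lambda\Estar$ with $\lambda=\frac{\alpha+\beta}{2\alpha+\beta}\in[\nicefrac12,1]$. The pointwise maximizer accepts exactly when this score is at most $\tau=1/(2\alpha+\beta)$. The other two problems are handled the same way. For $(\mathrm{P}_{\Risk})$ the Lagrangian integrand is $\Tstar+\beta\Estar-\mu$, and for $(\mathrm{P}_{\Regret})$ it is $\Estar+\alpha\Tstar-\mu$, where $\mu\ge0$ is the multiplier of the coverage floor. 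In both cases the coefficient on $\Estar$ is at least the coefficient on $\Astar$, and this is precisely the mechanism that forces $\lambda\ge\nicefrac12$.

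The key steps, in order, are as follows.
\begin{enumerate}
\item Establish strong duality and existence of multipliers. The problems are linear and feasible. We use a Slater-type argument where a strictly feasible point exists. Where none exists, we use the compactness of the set of attainable triples $\big(\coverage(\selector),\Risk(\selector),\Regret(\selector)\big)\subset\Reals^3$, which is convex and, via a Lyapunov or Neyman--Pearson argument, closed.
\item Show that any optimizer must coincide $\ptrue$-a.e.\ with the pointwise maximizer of the Lagrangian, except on the tie set $\{(1-\lambda)\Astar+\lambda\Estar=\tau\}$.
\item Invoke the continuity assumption. For $\lambda\in[\nicefrac12,1]$ the tie set has measure zero, so the selector can be taken deterministic, which yields \eqref{eq:unified-selector}. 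The same assumption ensures that a threshold $\tau$ can be tuned so that the active constraints hold with equality, because the map $\tau\mapsto$ (coverage, risk, regret) is continuous.
\item Handle the degenerate multiplier cases. If $\alpha=\beta=0$ in $(\mathrm{P}_{\coverage})$, neither constraint binds and $\selector\equiv1$ is optimal; this is the case $\tau=\infty$, or $\tau$ large enough with any $\lambda$. In the other two problems, $\alpha=0$ or $\beta=0$ corresponds to the endpoints $\lambda=1$ and $\lambda=\nicefrac12$ respectively.
\item Verify the endpoint identifications directly. At $\lambda=\nicefrac12$ the score is $\Tstar/2$, and at $\lambda=1$ it is $\Estar$.
\end{enumerate}

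The step I expect to be the main obstacle is existence of multipliers together with the claim that the optimizer is a single threshold rule. Two constraints are active simultaneously, which is one more than in the Neyman--Pearson lemma. I would first try to prove that the attainable set is convex and compact by working in $L^\infty$ with the weak-* topology, where $\{0\le\selector\le1\}$ is compact and the three functionals are weak-* continuous because of the $L^1$ assumption. A supporting-hyperplane argument at the optimal point then yields nonnegative multipliers.

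The remaining subtlety is a possible zero multiplier on the objective, i.e.\ an abnormal case in which the constraint set touches the boundary. Feasibility with $\kappa<1$ and $\rho,\gamma>0$ should exclude it; failing that, such a point can be treated as a limiting case of the form $\lambda\to\nicefrac12$ or $\lambda\to1$. Finally, the remark that normalized risk and regret give the same solution set would follow because, once coverage is fixed at its binding value, the ratio constraints become linear constraints of the same form.
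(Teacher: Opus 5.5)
Your proposal is correct and is essentially the paper's argument: the paper writes the same three Lagrangians, collects the terms multiplying $\selector(x)$, and normalizes to get the same expressions for $\lambda$, while your steps on existence of multipliers (weak-$*$ compactness and separation), null tie sets, and degenerate multipliers supply rigor that the paper's proof omits. One small correction: feasibility does not rule out the abnormal case (in $(\mathrm{P}_{\Risk})$, take $\gamma$ equal to the least regret attainable at coverage $\kappa$, so that Slater fails and finite KKT multipliers generally do not exist), but there your separating hyperplane puts zero weight on the objective and the pointwise rule thresholds $\Estar$ alone, i.e.\ $\lambda=1$ (resp.\ $\Tstar$ alone, $\lambda=\nicefrac{1}{2}$, in $(\mathrm{P}_{\Regret})$), so no limiting argument is needed and the closed interval $[\nicefrac{1}{2},1]$ in the statement is genuinely attained.
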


\begin{proof}
The proof simply applies the \citep{karush39minima, Kuhn1951Nonlinear} KKT conditions to the Lagrangian of each problem. Full case-by-case algebra is provided in the Appendix.
\end{proof}

\paragraph{Remarks.}
The theoretical bound $\lambda \in [\nicefrac{1}{2}, 1]$ applies strictly to the ground-truth components {\small $( \Astar, \Estar)$}. When evaluating imperfect learned estimators {\small{$(\Ahat, \Ehat)$}} empirically to construct achievable surfaces, we permit unconstrained linear combinations to compensate for miscalibration. Additionally, for non-strictly-proper losses (e.g., $0/1$), \cref{thm:unified} holds almost everywhere; ties are handled via standard linear programming constructions \citep{franc2023optimal}.

\subsection{A diagnostic for joint operational utility}
Theorem~\ref{thm:unified} characterizes the optimal selector as a threshold on {\small $(1-\lambda) {\small \Astar} + \lambda {\small \Estar}$}, which traces a Pareto-optimal surface $\mathcal{S}^\ast$ in {\small $(\coverage(C), \Risk(C), \Regret(C))$}-space as $\lambda$ sweeps $[\nicefrac{1}{2}, 1]$. For learned estimators {\small $(\Ahat, \Ehat)$}, the analogous achievable surface $\mathcal{S}$ either matches $\mathcal{S}^\ast$ closely or does not, and the distance between them quantifies whether the components jointly contain the information needed for optimal deployment decisions. We therefore propose this distance --- the \emph{Pareto-gap} --- as a complementary diagnostic to rank correlation \citep{mucsanyi2024benchmarking} for evaluating uncertainty decompositions.

We define the \emph{Pareto-optimal surface} $\mathcal{S}^{*}$ and a method's \emph{achievable surface} $\mathcal{S}(\Ahat, \Ehat)$ in the 3D space of Coverage ($\coverage$), Risk ($\Risk$), and Regret ($\Regret$):
\begin{align}
    \mathcal{S}^{*} \;&=\; \big\{ (\coverage(C), \Risk(C), \Regret(C)) \;\big|\; C = \indicator{(1-\lambda) \Astar + \lambda \Estar \leq \tau},\; \lambda \in[\nicefrac{1}{2}, 1],\, \tau \in \Reals \big\}, \nonumber \\
    \mathcal{S}(\Ahat, \Ehat) \;&=\; \big\{ (\coverage(C), \Risk(C), \Regret(C)) \;\big|\; C = \indicator{w_1 \Ahat + w_2 \Ehat \leq \tau},\; w_1, w_2, \tau \in \Reals \big\}.
\end{align}
To analyze the best-case, in \Cref{sec:experiments} we search for the optimal linear combination weights $(w_1, w_2)$ and thresholds $\tau$ directly on the test set. We emphasize that this is an oracle evaluation designed to measure the total capacity of the estimated uncertainty, not a practical deployment strategy. We quantify the distance between a method achievable surface and the oracle surface using the \textit{Pareto-gap}, calculated via the modified Inverted Generational Distance (IGD+) \citep{ishibuchi2015modified}.
\section{Empirical Benchmark}
\label{sec:experiments}

To validate our theoretical claims, we benchmark a diverse suite of uncertainty quantification methods. We implement and evaluate the proposed expected regret framework. For the standard proxy tasks---OOD detection, selective classification, and active learning---we directly report the results from a concurrent submission \texttt{probly} \citep{probly} (provided in the supplementary material). 

\subsection{Experimental Setup}

\paragraph{Datasets.} 
    Our evaluation spans two regimes. \textbf{First}, for benchmarking the regret, we require datasets that provide access to a close approximation of the true data-generating distribution $\ptrue(y \mid x)$ via dense human annotations\footnote{We exclude ImageNet-ReaL as its multi-label format does not represent distribution $\ptrue(y\ | \ x)$ over a single true class.}. We evaluate our framework on APPA-REAL \citep{agustsson2017appareal} and the 8 diverse, real-world datasets\footnote{We do not experiment with the DCIC-Synthetic data that are traditionally a part of the suite.} comprising the DCIC benchmark suite \citep{schmarje2022benchmark} and CIFAR-10H \citep{peterson2019human}. Access to these dense distributions allows us to compute the exact Bayes-optimal predictor and the true regret across a variety of tasks and domains. \textbf{Second}, to evaluate standard proxy tasks, we use out-of-distribution detection on the OpenOOD benchmark \citep{zhang2023openood}, and active learning acquisition on OpenML \citep{OpenML2025} tabular dataset. We adopt the results on these proxies from a concurrent submission \citep{probly} (provided in the supplementary material).

\paragraph{Evaluated methods and implementation.} 
We evaluate a representative set of standard UQ baselines, spanning Bayesian, ensemble, and post-hoc families. These include MC-Dropout \citep{gal2016dropout}, Deep Ensembles \citep{lakshminarayanan2017simple}, Deep Deterministic Uncertainty (DDU) \citep{mukhoti2023deep} and Evidential Networks \citep{sensoy2018evidential}. For CIFAR-10H and the DCIC suite, all methods utilize a ResNet-18 backbone pre-trained on ImageNet and fine-tuned on the respective target datasets. For the APPA-REAL age estimation task, we extract features using a ResNet-101 pre-trained on public facial age datasets, and train a lightweight MLP on top. Crucially, we treat all tasks as categorical probability estimation; APPA-REAL models are trained via Cross-Entropy over discrete age classes (0--100). However, as we define uncertainty strictly under the deployment loss $\ell$, we extract the loss-consistent aleatoric {\small($\Ahat$}) and epistemic ({\small $\Ehat$}) components under CE and 0/1 losses for CIFAR/DCIC, and under squared and absolute error losses for APPA-REAL \citep{hofman2024quantifying, franc2025epistemic}. Full training details are provided in the Appendix.

\subsection{Evaluation protocol and metrics}
We evaluate the models along two distinct operational axes: pure epistemic utility of $\Ehat$ and joint representational capacity of both uncertainty estimates $\Ahat, \Ehat$. 

\textbf{First}, to evaluate pure epistemic utility, we fix the selector to threshold only the estimated epistemic component ($\Ehat$). Let $\selector_\coverage$ be a selector that accepts the subset $\mathcal{A}_\coverage$ of $\lfloor \coverage N \rfloor$ points where $\coverage \in [0, 1]$ is the coverage. The generalized risk and regret curves \citep{traub2024overcoming} are $\Risk(\coverage) \eqdef \frac{1}{N} \sum_{i \in \mathcal{A}_\coverage} \Tstar(x_i, \Data)$ and $\Regret(\coverage) \eqdef \frac{1}{N} \sum_{i \in \mathcal{A}_\coverage} \Estar(x_i, \Data)$. We evaluate the \emph{realized} frequentist counterparts of $\Tstar,\Astar,\Estar$ but for brevity do not change notation.

Integrating these curves yields the area metrics $\AuRC$ and $\AuReC \eqdef \int_0^1 \Regret(\coverage) \, d\coverage$. To isolate the ranking quality of the selector from the inherent error differences of the classifiers, we define $\text{excess-}\AuReC \eqdef \AuReC - \AuReC_{\text{oracle}}$ \citep{geifman2019bias}, and $n\text{-}\AuReC \eqdef \nicefrac{\AuReC - \AuReC_{\text{oracle}}}{\AuReC_{\text{random}} - \AuReC_{\text{oracle}}}$ \citep{cattelan2024how}. Here, $\AuReC_{\text{oracle}}$ is the area achieved by perfectly ranking points by $\Estar$, and $\AuReC_{\text{random}}$ is the area of a random selector.

\textbf{Second}, to evaluate the operational disentanglement of $\Ahat$ and $\Ehat$, we measure how closely the method can approximate the theoretical optimum of \cref{thm:unified}. To this end, we measure the \emph{Pareto-gap}, a distance between the achievable surface $\mathcal{S}$ and the optimal surface $\mathcal{S}^\ast$, see \Cref{sec:framework}.

\subsection{Results: Misalignment between proxy tasks and true regret}
\paragraph{OOD detection.} A prevalent assumption in current literature is that improved OOD detection acts as a reliable surrogate for better epistemic uncertainty quantification. We now demonstrate empirically that the theoretical misalignment between the tasks (\Cref{sec:task_misalignment}) leads to rank inversions among standard UQ methods. We evaluate OOD detection $\AuROC$ using the OpenOOD benchmark \citep{zhang2023openood} (\Cref{tab:ood_summary}) and epistemic regret using the $\AuReC$ metric on datasets with dense human annotations (\Cref{tab:regret_cifar10h,tab:regret_appareal,tab:regret_dcic}). Under the proxy OOD task, Deep Ensembles and DDU are the dominant methods. On CIFAR-10, Deep Ensembles performs the best at Near-OOD ($\AuROC=0.907$), and DDU the best on Far-OOD ($\AuROC=0.935$). In contrast, Evidential networks perform poorly on these OOD splits, $0.711$ and $0.726$, respectively.

However, when evaluating strictly the ability to isolate the reducible error (regret) on the predictive task, the rankings shift markedly for individual methods. On CIFAR-10H (\Cref{tab:regret_cifar10h}, CE Loss), Evidential networks outperform all other baselines, yielding the lowest $\AuReC = 0.0673$ and the best ranking quality isolated from base accuracy $\text{ex-}\AuReC = 0.0183$. Conversely, DDU, which excelled at Far-OOD detection, performs the worst in terms of pure regret ($\AuReC = 0.0958$). Ultimately, relying on OOD detection as a proxy leads practitioners to select different models than if they directly measured deployment regret.

\paragraph{Active learning.}
Beyond OOD detection, active learning is frequently utilized as a proxy benchmark for epistemic uncertainty estimators. To determine whether performance on this global acquisition task translates to our regret framework, we again report the active learning evaluations conducted by a concurrent submission \citep{probly} (provided in the supplementary material). 

As shown in \Cref{tab:al_summary}, when evaluating the Normalized Area Under the budget Curve (NAUC) on tabular datasets, MC-Dropout and Deep Ensembles reliably identify informative samples. Conversely, Evidential networks struggle significantly (e.g., $0.177$ NAUC on OpenML 6). However, this ranking can invert when evaluating based on the regret, \Cref{tab:regret_cifar10h}, for instance they dominate in epistemic regret on CIFAR-10H ($\AuReC = 0.0673$). This supports the theoretical claim from \Cref{sec:task_misalignment}: selecting data to minimize expected future error globally operates under different mathematical incentives than isolating the reducible error of a specific sample.

\subsection{Results: Rank correlation vs. operational disentanglement}

Recent benchmarks rely on the Spearman rank correlation ($r_s$) to evaluate uncertainty disentanglement, asserting that highly correlated components ($\Ahat, \Ehat$) signify a failure to separate aleatoric and epistemic uncertainties \citep{mucsanyi2024benchmarking}. However, our empirical evaluation reveals two issues with relying on this metric.

\emph{First}, \Cref{tab:regret_cifar10h} shows that the rank correlation $r_s(\Astar, \Estar)$ of the \emph{ground-truth} uncertainties on CIFAR-10H (with CE loss) reaches up to $0.882$. Note that this ground-truth value varies across the evaluated methods because our definition of true epistemic uncertainty ($\Estar$) evaluates the expected regret (reducible error) of the specific predictor being deployed. Consequently, a high rank correlation between learned estimators may accurately reflect the underlying data-generating process, rather than a failure of the uncertainty estimation method.

\emph{Second}, rank correlation merely measures whether two scores sort the data in the same order; it does not capture their joint operational utility. For example, on CIFAR-10H (CE loss), Evidential networks exhibit a near-perfect rank correlation of $r_s(\Ahat, \Ehat) = 0.999$, which would traditionally classify them as entirely entangled. Yet, when evaluating their joint representational capacity, Evidential network uncertainty estimates achieve the lowest distance to their theoretical optimum ($\text{P-gap} = 0.0379$) alongside the best pure epistemic utility ($\AuReC = 0.0673$). 

This is possible because the unified framework (\Cref{sec:framework}) evaluates the achievable decision boundary $\indicator{w_1 \Ahat + w_2 \Ehat \leq \tau}$. Even if two components sort the data identically, unconstrained linear combinations of these scores can still form distinct decision boundaries, provided their relationship is not strictly linear. A low Pareto-gap directly implies that a linear combination of the estimated scores ($\Ahat, \Ehat$) can closely match the theoretically optimal decision boundary produced by $(\Astar, \Estar)$ for any given budget of risk, coverage, and regret. Therefore, we propose the Pareto-gap as a more reliable diagnostic for disentanglement: it explicitly evaluates whether the components functionally contain the information needed to make optimal deployment decisions, rather than penalizing them for possibly correct correlation.
\begin{table}[]
\centering
\footnotesize
\caption{\textbf{Regret and disentanglement on CIFAR-10H.} Results using ResNet-18. Values averaged across 5 seeds. By $r_s$ we denote Spearman rank correlation.}
\label{tab:regret_cifar10h}
\begin{tabular}{l l c c c c c c c}
\toprule
Method & Loss & $\downarrow$ $\AuReC$ & {$\downarrow$ ex-$\AuReC$} & $\downarrow$ {$n$-$\AuReC$} & $\downarrow$ {$\AuRC$} & $\downarrow$ {P-gap} & {$r_s(\Ahat, \Ehat)$} & {$r_s(\Astar, \Estar)$} \\
\midrule
Dropout & CE & $0.1023$ & $0.0738$ & $0.592$ & $0.1609$ & $0.1074$ & $0.912$ & $0.882$ \\
 & $0/1$ & $0.0162$ & $0.0154$ & $0.852$ & $0.0380$ & $0.0117$ & $0.205$ & $0.216$ \\
\addlinespace[2pt]
Evidential & CE & $\mathbf{0.0673}$ & $\mathbf{0.0183}$ & $\mathbf{0.260}$ & $\mathbf{0.1211}$ & $\mathbf{0.0379}$ & $0.999$ & $0.293$ \\
 & $0/1$ & $\mathbf{0.0030}$ & $\mathbf{0.0023}$ & $\mathbf{0.132}$ & $\mathbf{0.0175}$ & $\underline{0.0103}$ & $0.999$ & $0.213$ \\
\addlinespace[2pt]
Ensemble & CE & $\underline{0.0752}$ & $\underline{0.0491}$ & $\underline{0.519}$ & $\underline{0.1271}$ & $\underline{0.0753}$ & $0.880$ & $0.865$ \\
 & $0/1$ & $0.0058$ & $0.0053$ & $0.379$ & $0.0248$ & $\mathbf{0.0097}$ & $0.456$ & $0.208$ \\
\addlinespace[2pt]
DDU & CE & $0.0958$ & $0.0665$ & $0.543$ & $0.1520$ & $0.0979$ & $0.674$ & $0.875$ \\
 & $0/1$ & $\underline{0.0034}$ & $\underline{0.0027}$ & $0.151$ & $\underline{0.0188}$ & $0.0108$ & $0.678$ & $0.212$ \\
\bottomrule
\end{tabular}
\end{table}
\vspace{-0.5em}
\begin{table}[]
\centering
\footnotesize
\caption{\textbf{Regret and disentanglement on APPA-REAL.} Results using ResNet-101 trained on a collection of age estimation datasets \citep{agedb, morph, lagenda, csfd, KanFace} with a multi-layer-perceptron trained over the features. Values averaged across 5 seeds. By $r_s$ we denote Spearman rank correlation.}
\label{tab:regret_appareal}
\begin{tabular}{l l c c c c c c c}
\toprule
Method & Loss & $\downarrow$ {$\AuReC$} & $\downarrow$ {ex-$\AuReC$} & $\downarrow$ {$n$-$\AuReC$} & $\downarrow$ {$\AuRC$} & $\downarrow$ {P-gap} & {$r_s(\Ahat, \Ehat)$} & {$r_s(\Astar, \Estar)$} \\
\midrule
Dropout & sqr & $\underline{11.6382}$ & $\mathbf{6.6169}$ & $\underline{0.418}$ & $\mathbf{22.6765}$ & $0.2005$ & $0.823$ & $0.316$ \\
 & abs & $\underline{0.8184}$ & $\mathbf{0.4499}$ & $\mathbf{0.618}$ & $\mathbf{2.5493}$ & $0.1722$ & $0.718$ & $0.133$ \\
\addlinespace[2pt]
Evidential & sqr & $16.9411$ & $8.4294$ & $\mathbf{0.386}$ & $30.5462$ & $0.1998$ & $0.976$ & $-0.128$ \\
 & abs & $0.9707$ & $0.5689$ & $0.713$ & $2.7757$ & $0.1678$ & $0.418$ & $0.049$ \\
\addlinespace[2pt]
Ensemble & sqr & $\mathbf{11.3519}$ & $\underline{6.6908}$ & $0.453$ & $\underline{22.8190}$ & $\mathbf{0.1806}$ & $0.674$ & $0.273$ \\
 & abs & $\mathbf{0.7948}$ & $\underline{0.4640}$ & $\underline{0.689}$ & $\underline{2.5685}$ & $\mathbf{0.1531}$ & $0.580$ & $0.107$ \\
\addlinespace[2pt]
DDU & sqr & $20.0456$ & $15.4126$ & $1.048$ & $33.5724$ & $\underline{0.1843}$ & $-0.201$ & $0.273$ \\
 & abs & $1.0422$ & $0.7116$ & $1.048$ & $3.0234$ & $\underline{0.1595}$ & $-0.235$ & $0.104$ \\
%\addlinespace[2pt]
%Laplace & sq & $16.0762$ & $9.4370$ & $0.459$ & $29.2064$ & $0.2069$ & $0.797$ & $0.006$ \\
% & abs & $0.9191$ & $0.5571$ & $0.733$ & $2.7855$ & $0.1839$ & $0.562$ & $0.107$ \\
\bottomrule
\end{tabular}
\end{table}
\vspace{-0.5em}
\begin{table}[]
\centering
\small
\caption{\textbf{Regret and disentanglement across the 8 DCIC datasets.} 
Results using ResNet-18 finetuned from ImageNet. Values averaged across 5 seeds. By $r_s$ we denote Spearman correlation. The \textit{Best} column reports the number of datasets where a method achieved the lowest $n$-AuReC. Full results per each dataset can be found in the Appendix.}\label{tab:regret_dcic}
\begin{tabular}{l l c c c c c c}
\toprule
Method & Loss & $\uparrow$  Best & $\downarrow$  $\overline{n\text{-}\AuReC}$ & $\downarrow$ $\overline{\text{P-gap}}$ & $\overline{r_s(\Ahat, \Ehat)}$ & $\overline{r_s(\Astar, \Estar)}$ \\
\midrule
MC-Dropout & CE & 0  & $0.688$ & $0.0478$ & $0.601$ & $0.292$ \\
 & $0/1$ & 0 & $0.794$ & $0.0237$ & $0.484$ & $0.267$ \\
\addlinespace[2pt]
Evidential & CE & 3 & $\underline{0.601}$ & $\mathbf{0.0362}$ & $0.521$ & $0.208$ \\
 & $0/1$ & 2 & $\underline{0.617}$ & $\underline{0.0226}$ & $0.461$ & $0.277$ \\
\addlinespace[2pt]
Ensemble & CE & 5 & $\mathbf{0.596}$ & $\underline{0.0385}$ & $0.651$ & $0.291$ \\
 & $0/1$ & 6 & $\mathbf{0.502}$ & $\mathbf{0.0207}$ & $0.594$ & $0.289$ \\
\addlinespace[2pt]
DDU & CE & 0 & $0.967$ & $0.0572$ & $-0.219$ & $0.347$ \\
 & $0/1$ & 0 & $0.978$ & $0.0248$ & $-0.216$ & $0.266$ \\
\bottomrule
\end{tabular}
\end{table}
\vspace{-0.5em}
\begin{table}[htb]
\centering{\footnotesize
\caption{\textbf{Performance on OOD \& SC.} Out-of-distribution detection $\AuROC$ (using {\small $\Ehat[\text{log}]$}) and selective classification Area-under-Accuracy-Coverage $\AuAC$ (using {\small $\Ehat[\text{log}] + \Ahat[\text{log}]$}) evaluated with ResNet-18 for CIFAR-10 and ResNet-50 for ImageNet. Results adopted from \citep{probly}. Values averaged across 3 runs. $\AuAC$ uses the standard definition, \emph{not} version of \citep{traub2024overcoming}.}
\label{tab:ood_summary}
\begin{tabular}{l c c c c c c}
\toprule
 & \multicolumn{3}{c}{CIFAR10} & \multicolumn{3}{c}{Imagenet} \\
\cmidrule(lr){2-4} \cmidrule(lr){5-7}
Method & $\uparrow$ Near-OOD & $\uparrow$ Far-OOD & $\uparrow$ $\AuAC$ & $\uparrow$ Near-OOD & $\uparrow$ Far-OOD & $\uparrow$ $\AuAC$ \\
\midrule
DDU & \underline{0.899} & \textbf{0.935} & \underline{0.995} & 0.673 & \underline{0.803} & \underline{0.916} \\
Dropout & 0.839 & 0.895 & 0.990 & 0.690 & 0.761 & \underline{0.916} \\
Ensemble & \textbf{0.907} & \underline{0.931} & \textbf{0.997} & \textbf{0.737} & \textbf{0.820} & \textbf{0.935} \\
Evidential & 0.711 & 0.726 & 0.901 & \underline{0.719} & 0.700 & 0.816 \\
\bottomrule
\end{tabular}
}
\end{table}
\vspace{-0.5em}
\begin{table}[]
\centering
{\footnotesize
\caption{\textbf{Active Learning Proxy Performance.} Evaluation of active learning acquisition on two OpenML tabular datasets, utilizing both Epistemic ($\Ehat$) and Total ($\Ahat + \Ehat$) uncertainty scorer. Results adopted from \citep{probly}. Performance is measured by the Normalized Area Under the budget Curve (NAUC) across 10 query rounds. Values are averaged across 10 runs.}
\label{tab:al_summary}
\begin{tabular}{l c c c c}
\toprule
 & \multicolumn{2}{c}{OpenML ID 6} & \multicolumn{2}{c}{OpenML ID 156} \\
\cmidrule(lr){2-3} \cmidrule(lr){4-5}
Method & EU $\uparrow$ & TU $\uparrow$ & EU $\uparrow$ & TU $\uparrow$ \\
\midrule
MC-Dropout    & \textbf{0.962} & \textbf{0.963} & 0.655 & 0.822 \\
Deep Ensemble & \underline{0.960} & \underline{0.959} & \textbf{0.783} & \textbf{0.857} \\
DDU           & 0.943 & 0.956 & \underline{0.693} & \underline{0.839} \\
Evidential    & 0.177 & 0.177 & 0.606 & 0.606 \\
\bottomrule
\end{tabular}
}
\end{table}

\section{Conclusion and Limitations}
\label{sec:conclusion}

In this work, we addressed an evaluation misalignment in uncertainty quantification. We demonstrated that standard proxy tasks---such as out-of-distribution detection and active learning---optimize objectives that are mathematically distinct from minimizing deployment regret. We unified selective classification and the epistemic reject-option into a single constrained-optimization framework. We show that rank correlation between learned aleatoric and epistemic components is largely blind to their joint operational utility. We benchmarked methods by their pure epistemic utility ($\AuReC$) and their joint representational capacity (the Pareto-gap).

\paragraph{Limitations \& broader impact.}
Calculating true expected regret requires access to the exact data-generating distribution $\ptrue(y \mid x)$. Consequently, the used metrics are strictly benchmarking tools and cannot single-label data. Due to the use of data with $\ptrue(y \mid x)$, our ground-truth operational regret is computed against the unrestricted oracle and explicitly contains both estimation and approximation error. However, we do not separate them. Our evaluations span heterogeneous datasets; different datasets for OOD, active learning, and regret. This work constitutes research on the evaluation of machine learning reliability. We do not foresee any negative societal impacts.

\bibliographystyle{plainnat}
\bibliography{references}

@article{franc2025epistemic,
  title={Epistemic Reject Option Prediction},
  author={Franc, Vojt{\v{e}}ch and Paplh{\'a}m, Jakub},
  journal={arXiv preprint arXiv:2511.04855},
  year={2025}
}

@InProceedings{rossellini2024integrating,
  title = 	 {Integrating Uncertainty Awareness into Conformalized Quantile Regression},
  author =       {Rossellini, Raphael and Foygel Barber, Rina and Willett, Rebecca},
  booktitle = 	 {Proceedings of The 27th International Conference on Artificial Intelligence and Statistics},
  pages = 	 {1540--1548},
  year = 	 {2024},
  editor = 	 {Dasgupta, Sanjoy and Mandt, Stephan and Li, Yingzhen},
  volume = 	 {238},
  series = 	 {Proceedings of Machine Learning Research},
  month = 	 {02--04 May},
  publisher =    {PMLR},
  url = 	 {https://proceedings.mlr.press/v238/rossellini24a.html}
}

@inproceedings{depeweg2018decomposition,
  title={Decomposition of Uncertainty in {B}ayesian Deep Learning for Efficient and Risk-Sensitive Learning},
  author={Depeweg, Stefan and Hern{\'a}ndez-Lobato, Jos{\'e} Miguel and Doshi-Velez, Finale and Udluft, Steffen},
  booktitle={International Conference on Machine Learning},
  pages={1184--1193},
  year={2018}
}

@inproceedings{wimmer2023quantifying,
  title={Quantifying Aleatoric and Epistemic Uncertainty in Machine Learning: Are Conditional Entropy and Mutual Information Appropriate Measures?},
  author={Wimmer, Lisa and Sale, Yusuf and Hofman, Paul and Bischl, Bernd and H{\"u}llermeier, Eyke},
  booktitle={Proceedings of the Thirty-Ninth Conference on Uncertainty in Artificial Intelligence (UAI)},
  volume={216},
  pages={2282--2292},
  year={2023}
}

@article{hullermeier2021aleatoric,
  title={Aleatoric and Epistemic Uncertainty in Machine Learning: An Introduction to Concepts and Methods},
  author={H{\"u}llermeier, Eyke and Waegeman, Willem},
  journal={Machine Learning},
  volume={110},
  number={3},
  pages={457--506},
  year={2021}
}

@inproceedings{geifman2017selective,
  title={Selective Classification for Deep Neural Networks},
  author={Geifman, Yonatan and El-Yaniv, Ran},
  booktitle={Advances in Neural Information Processing Systems},
  year={2017}
}

@inproceedings{geifman2019bias,
  title={Bias-Reduced Uncertainty Estimation for Deep Neural Classifiers},
  author={Geifman, Yonatan and Uziel, Guy and El-Yaniv, Ran},
  booktitle={International Conference on Learning Representations},
  year={2019}
}

@article{chow1970optimum,
  title={On Optimum Recognition Error and Reject Tradeoff},
  author={Chow, C K},
  journal={IEEE Transactions on Information Theory},
  volume={16},
  number={1},
  pages={41--46},
  year={1970}
}

@inproceedings{franc2024scod,
author = {Franc, Vojtech and Paplham, Jakub and Prusa, Daniel},
title = {SCOD: From Heuristics to Theory},
year = {2024},
isbn = {978-3-031-72906-5},
publisher = {Springer-Verlag},
address = {Berlin, Heidelberg},
url = {https://doi.org/10.1007/978-3-031-72907-2_25},
doi = {10.1007/978-3-031-72907-2_25},
booktitle = {Computer Vision – ECCV 2024: 18th European Conference, Milan, Italy, September 29 – October 4, 2024, Proceedings, Part LXXXIV},
pages = {424–441},
numpages = {18},
location = {Milan, Italy}
}

@InProceedings{sircscod2023,
author="Xia, Guoxuan
and Bouganis, Christos-Savvas",
editor="Wang, Lei
and Gall, Juergen
and Chin, Tat-Jun
and Sato, Imari
and Chellappa, Rama",
title="Augmenting Softmax Information for Selective Classification with Out-of-Distribution Data",
booktitle="Computer Vision -- ACCV 2022",
year="2023",
publisher="Springer Nature Switzerland",
address="Cham",
pages="664--680",
isbn="978-3-031-26351-4"
}

@article{dinkelbach1967on,
 ISSN = {00251909, 15265501},
 URL = {http://www.jstor.org/stable/2627691},
 author = {Werner Dinkelbach},
 journal = {Management Science},
 number = {7},
 pages = {492--498},
 publisher = {INFORMS},
 title = {On Nonlinear Fractional Programming},
 urldate = {2026-05-05},
 volume = {13},
 year = {1967}
}

@article{schmarje2022benchmark,
    author = {Schmarje, Lars and Grossmann, Vasco and Zelenka, Claudius and Dippel, Sabine and Kiko, Rainer and Oszust, Mariusz and Pastell, Matti and Stracke, Jenny and Valros, Anna and Volkmann, Nina and Koch, Reinahrd},
    journal = {Advances in Neural Information Processing Systems},
    pages = {33215--33232},
    title = {{Is one annotation enough? A data-centric image classification benchmark for noisy and ambiguous label estimation}},
    url = {https://proceedings.neurips.cc/paper_files/paper/2022/file/d6c03035b8bc551f474f040fe8607cab-Paper-Datasets_and_Benchmarks.pdf},
    volume = {35},
    year = {2022}
}

@INPROCEEDINGS{mukhoti2023deep,
  author={Mukhoti, Jishnu and Kirsch, Andreas and van Amersfoort, Joost and Torr, Philip H.S. and Gal, Yarin},
  booktitle={2023 IEEE/CVF Conference on Computer Vision and Pattern Recognition (CVPR)}, 
  title={Deep Deterministic Uncertainty: A New Simple Baseline}, 
  year={2023},
  volume={},
  number={},
  pages={24384-24394},
  doi={10.1109/CVPR52729.2023.02336}}

@inproceedings{lakshminarayanan2017simple,
author = {Lakshminarayanan, Balaji and Pritzel, Alexander and Blundell, Charles},
title = {Simple and scalable predictive uncertainty estimation using deep ensembles},
year = {2017},
isbn = {9781510860964},
publisher = {Curran Associates Inc.},
address = {Red Hook, NY, USA},
booktitle = {Proceedings of the 31st International Conference on Neural Information Processing Systems},
pages = {6405–6416},
numpages = {12},
location = {Long Beach, California, USA},
series = {NIPS'17}
}

@InProceedings{gal2016dropout,
  title = 	 {Dropout as a Bayesian Approximation: Representing Model Uncertainty in Deep Learning},
  author = 	 {Gal, Yarin and Ghahramani, Zoubin},
  booktitle = 	 {Proceedings of The 33rd International Conference on Machine Learning},
  pages = 	 {1050--1059},
  year = 	 {2016},
  editor = 	 {Balcan, Maria Florina and Weinberger, Kilian Q.},
  volume = 	 {48},
  series = 	 {Proceedings of Machine Learning Research},
  address = 	 {New York, New York, USA},
  month = 	 {20--22 Jun},
  publisher =    {PMLR},
  url = 	 {https://proceedings.mlr.press/v48/gal16.html}
}

@inproceedings{sensoy2018evidential,
author = {Sensoy, Murat and Kaplan, Lance and Kandemir, Melih},
title = {Evidential deep learning to quantify classification uncertainty},
year = {2018},
publisher = {Curran Associates Inc.},
address = {Red Hook, NY, USA},
booktitle = {Proceedings of the 32nd International Conference on Neural Information Processing Systems},
pages = {3183–3193},
numpages = {11},
location = {Montr\'{e}al, Canada},
series = {NIPS'18}
}

@article{
lahlou2023deup,
title={{DEUP}: Direct Epistemic Uncertainty Prediction},
author={Salem Lahlou and Moksh Jain and Hadi Nekoei and Victor I Butoi and Paul Bertin and Jarrid Rector-Brooks and Maksym Korablyov and Yoshua Bengio},
journal={Transactions on Machine Learning Research},
issn={2835-8856},
year={2023},
url={https://openreview.net/forum?id=eGLdVRvvfQ},
note={Expert Certification}
}

@article{franc2023optimal,
  title={Optimal Strategies for Reject Option Classifiers},
  author={Franc, Vojt{\v{e}}ch and Pr{\r{u}}{\v{s}}a, Daniel and Vor{\'a}{\v{c}}ek, V{\'a}clav},
  journal={Journal of Machine Learning Research},
  volume={24},
  number={11},
  pages={1--49},
  year={2023}
}

@misc{jimenez2026position,
      title={Position: Epistemic uncertainty estimation methods are fundamentally incomplete}, 
      author={Sebastián Jiménez and Mira Jürgens and Willem Waegeman},
      year={2026},
      eprint={2505.23506},
      archivePrefix={arXiv},
      primaryClass={cs.LG},
      url={https://arxiv.org/abs/2505.23506}, 
}

@misc{dejong2026measuring,
      title={Measuring Orthogonality as the Blind-Spot of Uncertainty Disentanglement}, 
      author={Ivo Pascal de Jong and Andreea Ioana Sburlea and Matthia Sabatelli and Matias Valdenegro-Toro},
      year={2026},
      eprint={2408.12175},
      archivePrefix={arXiv},
      primaryClass={cs.LG},
      url={https://arxiv.org/abs/2408.12175}, 
}

@inproceedings{
kotelevskii2025from,
title={From Risk to Uncertainty: Generating Predictive Uncertainty Measures via Bayesian Estimation},
author={Nikita Kotelevskii and Vladimir Kondratyev and Martin Tak{\'a}{\v{c}} and Eric Moulines and Maxim Panov},
booktitle={The Thirteenth International Conference on Learning Representations},
year={2025},
url={https://openreview.net/forum?id=cWfpt2t37q}
}

@inproceedings{traub2024overcoming,
 author = {Traub, Jeremias and Bungert, Till J. and L\"{u}th, Carsten T. and Baumgartner, Michael and Maier-Hein, Klaus H. and Maier-Hein, Lena and J\"{a}ger, Paul F.},
 booktitle = {Advances in Neural Information Processing Systems},
 doi = {10.52202/079017-0076},
 editor = {A. Globerson and L. Mackey and D. Belgrave and A. Fan and U. Paquet and J. Tomczak and C. Zhang},
 pages = {2323--2347},
 publisher = {Curran Associates, Inc.},
 title = {Overcoming Common Flaws in the Evaluation of Selective Classification Systems},
 url = {https://proceedings.neurips.cc/paper_files/paper/2024/file/047c84ec50bd8ea29349b996fc64af4b-Paper-Conference.pdf},
 volume = {37},
 year = {2024}
}

@inproceedings{cortes2016learning,
  title={Learning with Rejection},
  author={Cortes, Corinna and DeSalvo, Giulia and Mohri, Mehryar},
  booktitle={International Conference on Algorithmic Learning Theory},
  year={2016}
}

@article{hofman2024quantifying,
  title={Quantifying Aleatoric and Epistemic Uncertainty with Proper Scoring Rules},
  author={Hofman, Paul and Sale, Yusuf and H{\"u}llermeier, Eyke},
  journal={arXiv preprint arXiv:2404.12215},
  year={2024}
}

@inproceedings{gruber2023uncertainty,
  title={Uncertainty Estimates of Predictions via a General Bias-Variance Decomposition},
  author={Gruber, Sebastian G and Buettner, Florian},
  booktitle={International Conference on Artificial Intelligence and Statistics},
  year={2023}
}

@inproceedings{bengs2023second,
  title={On Second-Order Scoring Rules for Epistemic Uncertainty Quantification},
  author={Bengs, Viktor and H{\"u}llermeier, Eyke and Waegeman, Willem},
  booktitle={International Conference on Machine Learning},
  year={2023}
}

@inproceedings{hofman2026onesize,
  title={Uncertainty Quantification for Machine Learning: One Size Does Not Fit All},
  author={Hofman, Paul and Sale, Yusuf and H{\"u}llermeier, Eyke},
  booktitle={AAAI Conference on Artificial Intelligence},
  year={2026}
}

@inproceedings{mucsanyi2024benchmarking,
  title={Benchmarking Uncertainty Disentanglement: Specialized Uncertainties for Specialized Tasks},
  author={Mucs{\'a}nyi, B{\'a}lint and Kirchhof, Michael and Oh, Seong Joon},
  booktitle={Advances in Neural Information Processing Systems Datasets and Benchmarks Track},
  year={2024}
}

@article{houlsby2011bald,
  title={Bayesian Active Learning for Classification and Preference Learning},
  author={Houlsby, Neil and Husz{\'a}r, Ferenc and Ghahramani, Zoubin and Lengyel, M{\'a}t{\'e}},
  journal={arXiv preprint arXiv:1112.5745},
  year={2011}
}

@inproceedings{roy2001toward,
  title={Toward Optimal Active Learning Through Sampling Estimation of Error Reduction},
  author={Roy, Nicholas and McCallum, Andrew},
  booktitle={International Conference on Machine Learning},
  year={2001}
}

@inproceedings{peterson2019human,
  title={Human Uncertainty Makes Classification More Robust},
  author={Peterson, Joshua C and Battleday, Ruairidh M and Griffiths, Thomas L and Russakovsky, Olga},
  booktitle={IEEE/CVF International Conference on Computer Vision},
  year={2019}
}

@inproceedings{agustsson2017appareal,
  title={Apparent and real age estimation in still images with deep residual regressors on APPA-REAL database.},
  author={Agustsson, R. and Timofte, S. and Escalera, X. and Baro, I. and Guyon, I. and Rothe, R.},
  booktitle={12th IEEE International Conference and Workshops on Automatic Face and Gesture Recognition (FG), 2017},
  year={2017},
  organization={IEEE}
}

@inproceedings{kendall2017uncertainties,
  title={What Uncertainties Do We Need in {B}ayesian Deep Learning for Computer Vision?},
  author={Kendall, Alex and Gal, Yarin},
  booktitle={Advances in Neural Information Processing Systems},
  year={2017}
}

@article{hendrickx2024machine,
author = {Hendrickx, Kilian and Perini, Lorenzo and Van der Plas, Dries and Meert, Wannes and Davis, Jesse},
title = {Machine learning with a reject option: a survey},
year = {2024},
issue_date = {May 2024},
publisher = {Kluwer Academic Publishers},
address = {USA},
volume = {113},
number = {5},
issn = {0885-6125},
url = {https://doi.org/10.1007/s10994-024-06534-x},
doi = {10.1007/s10994-024-06534-x},
journal = {Mach. Learn.},
month = mar,
pages = {3073–3110},
numpages = {38}
}

@mastersthesis{karush39minima,
  address = {Chicago, IL, USA},
  author = {Karush, William},
  school = {Department of Mathematics, University of Chicago},
  title = {Minima of Functions of Several Variables with Inequalities as Side Conditions},
  year = 1939
}

@inproceedings{Kuhn1951Nonlinear,
  author    = {Kuhn, Harold W. and Tucker, Albert W.},
  title     = {Nonlinear Programming},
  booktitle = {Proceedings of the Second Berkeley Symposium on Mathematical Statistics and Probability},
  year      = {1951},
  editor    = {Neyman, Jerzy},
  pages     = {481--492},
  publisher = {University of California Press},
  address   = {Berkeley, Calif.},
  doi       = {10.1525/9780520411586-036}
}

@InProceedings{ishibuchi2015modified,
author="Ishibuchi, Hisao
and Masuda, Hiroyuki
and Tanigaki, Yuki
and Nojima, Yusuke",
editor="Gaspar-Cunha, Ant{\'o}nio
and Henggeler Antunes, Carlos
and Coello, Carlos Coello",
title="Modified Distance Calculation in Generational Distance and Inverted Generational Distance",
booktitle="Evolutionary Multi-Criterion Optimization",
year="2015",
publisher="Springer International Publishing",
address="Cham",
pages="110--125",
isbn="978-3-319-15892-1"
}

@inproceedings{
cattelan2024how,
title={How to Fix a Broken Confidence Estimator: Evaluating Post-hoc Methods for Selective Classification with Deep Neural Networks},
author={Lu{\'\i}s Felipe Prates Cattelan and Danilo Silva},
booktitle={The 40th Conference on Uncertainty in Artificial Intelligence},
year={2024},
url={https://openreview.net/forum?id=IJBWLRCvYX}
}

@article{zhang2023openood,
  title={OpenOOD v1.5: Enhanced Benchmark for Out-of-Distribution Detection},
  author={Zhang, Jingyang and Yang, Jingkang and Wang, Pengyun and Wang, Haoqi and Lin, Yueqian and Zhang, Haoran and Sun, Yiyou and Du, Xuefeng and Li, Yixuan and Liu, Ziwei and Chen, Yiran and Li, Hai},
  journal={arXiv preprint arXiv:2306.09301},
  year={2023}
}

@article{morph,
  title={MORPH: a longitudinal image database of normal adult age-progression},
  author={Karl Ricanek and Tamirat Tesafaye},
  journal={7th International Conference on Automatic Face and Gesture Recognition (FGR06)},
  year={2006},
  pages={341-345},
  url={https://api.semanticscholar.org/CorpusID:21440926}
}

@article{agedb,
  title={AgeDB: The First Manually Collected, In-the-Wild Age Database},
  author={Stylianos Moschoglou and A. Papaioannou and Christos Sagonas and Jiankang Deng and Irene Kotsia and Stefanos Zafeiriou},
  journal={2017 IEEE Conference on Computer Vision and Pattern Recognition Workshops (CVPRW)},
  year={2017},
  pages={1997-2005},
  url={https://api.semanticscholar.org/CorpusID:1755257}
}

@inproceedings{csfd,
  title     = {Photo Dating by Facial Age Aggregation},
  author    = {Paplham, Jakub and Franc, Vojtech},
  booktitle = {Proceedings of the IEEE/CVF Winter Conference on Applications of Computer Vision (WACV)},
  year      = {2026}
}

@article{KanFace,
  title={Investigating Bias in Deep Face Analysis: The KANFace Dataset and Empirical Study},
  author={Markos Georgopoulos and Yannis Panagakis and Maja Pantic},
  journal={Image Vis. Comput.},
  year={2020},
  volume={102},
  pages={103954},
  url={https://api.semanticscholar.org/CorpusID:218665464}
}

@article{lagenda,
    Author = {Maksim Kuprashevich and Irina Tolstykh},
    Title = {MiVOLO: Multi-input Transformer for Age and Gender Estimation},
    Year = {2023},
    Eprint = {arXiv:2307.04616},
  }

@misc{probly,
  author = {Anonymous, A.},
  title = {Anonymous Concurrent Submission},
  note = {Under review. Provided in supplementary material.},
  year = {2026}
}

@article{OpenML2025,
  author = {Bernd Bischl and Giuseppe Casalicchio and Taniya Das and Matthias Feurer and Sebastian Fischer and Pieter Gijsbers and Subhaditya Mukherjee and Andreas C Müller and László Németh and Luis Oala and Lennart Purucker and Sahithya Ravi and Jan N van Rijn and Prabhant Singh and Joaquin Vanschoren and Jos van der Velde and Marcel Wever},
  title = {OpenML: Insights from 10 years and more than a thousand papers},
  journal = {Patterns},
  volume = {6},
  number = {7},
  year = {2025},
  pages = {101317},
  url = {https://www.cell.com/patterns/fulltext/S2666-3899(25)00165-5},
  doi = {10.1016/j.patter.2025.101317},
  publisher = {Cell Press}
}

@inproceedings{li2025position,
title={Position: Supervised Classifiers Answer the Wrong Questions for {OOD} Detection},
author={Yucen Lily Li and Daohan Lu and Polina Kirichenko and Shikai Qiu and Tim G. J. Rudner and C. Bayan Bruss and Andrew Gordon Wilson},
booktitle={Forty-second International Conference on Machine Learning Position Paper Track},
year={2025}
}

\newpage
\appendix

\begin{center}
    \vspace*{2em}
    {\LARGE \textbf{Appendix: Evaluating Epistemic Uncertainty} \par}
    \vspace{1.5em}
\end{center}

\section{Proof of \Cref{thm:unified}}
\label{sec:proof_thm_unified}

We solve the constrained optimization problems $(P_{\rho})$, $(P_{Ri})$, and $(P_{Re})$ using the method of Lagrange multipliers and the Karush-Kuhn-Tucker (KKT) conditions \cite{karush39minima, Kuhn1951Nonlinear}. 

Because we operate in the fixed-dataset regime (as defined in Section~\ref{sec:prelims}), the training dataset $\Data$ is given, and the true uncertainty components $\Tstar(\cdot, \Data)$, $\Astar(\cdot, \Data)$, and $\Estar(\cdot, \Data)$ are deterministic functions of the input $x$. Consequently, our metrics are defined as expectations strictly over the marginal test distribution $\ptrue(x)$. Identical results hold when the metrics are defined over the $\ptrue(x, \Data)$.

\paragraph{Case 1: ($P_{\rho}$) Maximizing Coverage.}
Our goal is to maximize $\coverage(\selector)$, which is equivalent to minimizing $-\coverage(\selector)$. Let $\alpha \ge 0$ be the multiplier for the risk constraint ($\Risk(\selector) \le \rho$) and $\beta \ge 0$ be the multiplier for the regret constraint ($\Regret(\selector) \le \gamma$). The Lagrangian is:
\begin{equation}
    \mathcal{L}_{\rho} \;=\; -\coverage(\selector) + \alpha \big(\Risk(\selector) - \rho\big) + \beta \big(\Regret(\selector) - \gamma\big)
\end{equation}
Expanding the metrics into their expectations and substituting $\Tstar(x, \Data) = \Astar(x, \Data) + \Estar(x, \Data)$, we group all terms multiplied by $\selector(x)$:
\begin{equation}
    \mathcal{L}_{\rho} \;=\; \E_{x \sim \ptrue}\big[\selector(x)\, \big(-1 + \alpha \Astar(x, \Data) + (\alpha + \beta) \Estar(x, \Data)\big)\big] - \alpha\rho - \beta\gamma
\end{equation}
Setting the inner term to be less than or equal to zero gives the decision rule: 
\begin{equation*}
    \alpha \Astar(x, \Data) + (\alpha + \beta) \Estar(x, \Data) \le 1
\end{equation*}
Dividing by $2\alpha + \beta$ expresses this as a convex combination:
\begin{equation}
    (1 - \lambda)\, \Astar(x, \Data) + \lambda\, \Estar(x, \Data) \le \tau
\end{equation}
where $\lambda \eqdef \frac{\alpha+\beta}{2\alpha+\beta}$ and $\tau \eqdef \frac{1}{2\alpha+\beta}$. Because $\alpha \ge 0$ and $\beta \ge 0$, the numerator of $\lambda$ is always at least half of its denominator. Therefore, $\lambda \in [1/2, 1]$.

\paragraph{Case 2: ($P_{Ri}$) Minimizing Risk.}
Our goal is to minimize $\Risk(\selector)$. Let $\alpha \ge 0$ be the multiplier for the coverage floor ($\kappa - \coverage(\selector) \le 0$) and $\beta \ge 0$ be the multiplier for the regret constraint ($\Regret(\selector) - \gamma \le 0$). The Lagrangian is:
\begin{equation}
    \mathcal{L}_{Ri} \;=\; \Risk(\selector) + \alpha \big(\kappa - \coverage(\selector)\big) + \beta \big(\Regret(\selector) - \gamma\big)
\end{equation}
Grouping the terms multiplied by $\selector(x)$ gives:
\begin{equation}
    \mathcal{L}_{Ri} \;=\; \E_{x \sim \ptrue}\big[\selector(x)\, \big(\Astar(x, \Data) + (1 + \beta) \Estar(x, \Data) - \alpha\big)\big] + \alpha\kappa - \beta\gamma
\end{equation}
Setting the inner term to be less than or equal to zero yields:
\begin{equation*}
    \Astar(x, \Data) + (1 + \beta) \Estar(x, \Data) \le \alpha
\end{equation*}
Dividing by $2 + \beta$ gives the optimal thresholding rule with:
\begin{equation}
    \lambda \eqdef \frac{1+\beta}{2+\beta}, \quad \tau \eqdef \frac{\alpha}{2+\beta}
\end{equation}
The smallest value for $\lambda$ occurs when $\beta = 0$, giving $\lambda = 1/2$. As $\beta \to \infty$, $\lambda$ approaches $1$. Therefore, $\lambda \in[1/2, 1)$.

\paragraph{Case 3: ($P_{Re}$) Minimizing Regret.}
Our goal is to minimize $\Regret(\selector)$. Let $\alpha \ge 0$ be the multiplier for the coverage floor ($\kappa - \coverage(\selector) \le 0$) and $\beta \ge 0$ be the multiplier for the risk constraint ($\Risk(\selector) - \rho \le 0$). The Lagrangian is:
\begin{equation}
    \mathcal{L}_{Re} \;=\; \Regret(\selector) + \alpha \big(\kappa - \coverage(\selector)\big) + \beta \big(\Risk(\selector) - \rho\big)
\end{equation}
Grouping the terms multiplied by $\selector(x)$ gives:
\begin{equation}
    \mathcal{L}_{Re} \;=\; \E_{x \sim \ptrue}\big[\selector(x)\, \big(\beta \Astar(x, \Data) + (1 + \beta) \Estar(x, \Data) - \alpha\big)\big] + \alpha\kappa - \beta\rho
\end{equation}
Setting the inner term to be less than or equal to zero yields:
\begin{equation*}
    \beta \Astar(x, \Data) + (1 + \beta) \Estar(x, \Data) \le \alpha
\end{equation*}
Dividing by $1 + 2\beta$ gives:
\begin{equation}
    \lambda \eqdef \frac{1+\beta}{1+2\beta}, \quad \tau \eqdef \frac{\alpha}{1+2\beta}
\end{equation}
The largest value for $\lambda$ occurs when $\beta = 0$, giving $\lambda = 1$. As $\beta \to \infty$, the fraction approaches $1/2$. Therefore, $\lambda \in (1/2, 1]$.

\paragraph{Summary.}
In all feasible instances across the three optimization problems, the optimal selector forms a threshold on a convex combination $(1-\lambda)\Astar(x, \Data) + \lambda\Estar(x, \Data)$ where the weighting parameter $\lambda$ is strictly bounded in $[1/2, 1]$. Note that he theoretical bound applies strictly to the ground-truth components $(\Astar, \Estar)$, not to imperfect learned estimators {\small{$(\Ahat, \Ehat)$}}.

\hfill $\square$
\section{Proof of Theorem 1 under Normalized Metrics}
\label{sec:proof_thm_unified_normalized}

We define the normalized (selective) risk and regret metrics as ratios of their expected values over the coverage:
\begin{equation}
    \sRisk(\selector) \eqdef \frac{\Risk(\selector)}{\coverage(\selector)}, \qquad \sRegret(\selector) \eqdef \frac{\Regret(\selector)}{\coverage(\selector)}
\end{equation}

We can avoid redundantly solving the three optimization cases by observing that the unnormalized optimization problems from Section~\ref{sec:proof_thm_unified} all share a generalized Lagrangian of the form:
\begin{equation}
    \mathcal{L} \;=\; c_1 \cdot \Risk(\selector) + c_2 \cdot \Regret(\selector) - d \cdot \coverage(\selector) + \text{Constants}
    \label{eq:general_unnormalized}
\end{equation}
where $c_1 \ge 0$ and $c_2 \ge 0$ are either $1$ (if the metric is the objective) or a Lagrange multiplier $\alpha, \beta \ge 0$ (if the metric is a constraint), and $d$ absorbs the coverage constraints and objectives. Substituting $\Tstar(x, \Data) = \Astar(x, \Data) + \Estar(x, \Data)$, the point-wise decision rule for this generalized Lagrangian is to accept when:
\begin{equation}
    c_1\, \Astar(x, \Data) + (c_1 + c_2)\, \Estar(x, \Data) \le d \quad \implies \quad (1 - \lambda)\Astar(x, \Data) + \lambda\Estar(x, \Data) \le \tau
    \label{eq:lambda_mapping}
\end{equation}
where the trade-off parameter defining the convex combination is $\lambda \eqdef \frac{c_1+c_2}{2c_1+c_2} \in [1/2, 1]$.

We now show that substituting the normalized metrics $\sRisk(\selector)$ and $\sRegret(\selector)$ into the original problems simply transforms them back into the exact same generalized Lagrangian, preserving $c_1$ and $c_2$. Because the optimization acts over fractional metrics and $\coverage(\selector) > 0$, we apply linear-fractional programming techniques:
\begin{enumerate}
    \item \textbf{Fractional Constraints:} Any constraint on a normalized metric with an arbitrary budget bound $b$, such as $\sRisk(\selector) \le b$, is algebraically linearized by multiplying by the coverage, yielding $\Risk(\selector) - b\,\coverage(\selector) \le 0$. If this constraint has a Lagrange multiplier $\alpha$, its contribution to the Lagrangian becomes $\alpha\Risk(\selector) - \alpha b\,\coverage(\selector)$.
    \item \textbf{Fractional Objectives:} To minimize a fractional objective like $\sRisk(\selector)$, we apply Dinkelbach's theorem \citep{dinkelbach1967on}. If $\selector^*$ is the optimal selector yielding a minimum ratio $q^* = \sRisk(\selector^*)$, then $\selector^*$ equivalently minimizes the linearized objective $\Risk(\selector) - q^* \coverage(\selector)$.
\end{enumerate}

Applying these linearizations to any of the three optimization problems transforms the fractional metrics into their unnormalized counterparts, minus a coverage penalty. Collecting the terms yields a new Lagrangian:
\begin{equation}
    \mathcal{L}_{norm} \;=\; c_1 \cdot \Risk(\selector) + c_2 \cdot \Regret(\selector) - d' \cdot \coverage(\selector) + \text{Constants}
\end{equation}
Crucially, the coefficients $c_1$ and $c_2$ scaling the risk and regret are \textit{identically} the same as in the unnormalized case. The fractional linearizations only alter the coefficient on the coverage term, shifting $d$ to $d'$ (which absorbs the budget bounds and optimal ratio penalties like $b$ and $q^*$). 

Because $\lambda$ strictly depends only on $c_1$ and $c_2$ (Equation~\ref{eq:lambda_mapping}), the trade-off parameter between aleatoric and epistemic uncertainty remains. The use of normalized metrics purely shifts the optimal operating threshold $\tau$, but the functional form of the boundary $\lambda \in [1/2, 1]$ remains identical. \hfill $\square$
\section{Implementation Details}
\label{app:implementation}

All base model training, uncertainty extraction, and metric evaluations were executed using the standardized UQ framework \texttt{probly} \citep{probly}. We implemented our novel decision-theoretic regret evaluations (AuReC, Pareto-gap) on top of this framework.

\subsection{Datasets and Architectures}

\paragraph{CIFAR-10H.} 
Models were trained on the standard 50,000-image CIFAR-10 training split and evaluated on the 10,000-image CIFAR-10H test set, which contains dense human soft-label annotations \citep{peterson2019human}. We utilized a ResNet-18 backbone trained from scratch for 200 epochs. Training included standard data augmentations. 

\paragraph{DCIC Suite.} 
We evaluated on the multi-rater image classification datasets from the DCIC benchmark \citep{schmarje2022benchmark} (omitting the CIFAR-10H duplicate). For all DCIC datasets, we utilized a ResNet-18 backbone initialized with ImageNet-1K pre-trained weights. The classification head was replaced, and the networks were fine-tuned for 100 epochs.

\paragraph{APPA-REAL.}
The APPA-REAL apparent age dataset \citep{agustsson2017appareal} was treated as a discrete probability estimation task over integer ages $0$ to $100$ ($K=101$). To ensure high-quality representations, we extracted 2048-dimensional features using a ResNet-101 backbone pre-trained public facial age benchmarks \citep{morph,agedb, KanFace, csfd, lagenda}. The backbone was frozen, and a lightweight Multi-Layer Perceptron (MLP) head was trained in a \textit{probe regime}. The MLP architecture consisted of a linear layer (128 hidden units), ReLU activation, dropout, and a final linear projection.

\subsection{Method-Specific Training Regimes}
Unless otherwise specified, methods evaluating on CIFAR-10H and DCIC utilized Stochastic Gradient Descent (SGD) with a learning rate of $0.1$, Nesterov momentum of $0.9$, weight decay of $5\times 10^{-4}$, and a cosine annealing schedule. For the APPA-REAL feature-probe task, methods utilized AdamW with a learning rate of $10^{-3}$ and weight decay of $10^{-4}$ for 50 epochs. CIFAR-10 training utilizes one-hot labels, APPA-REAL and DCIC utilize the soft label distributions.

\begin{itemize}
    \item \textbf{Deep Ensembles:} Comprised of $M=5$ independently initialized members. For APPA-REAL, members also utilized a deterministic 5-fold partition of the training pool (each member dropping a different fold) to inject data-side diversity.
    \item \textbf{MC-Dropout:} Utilized dropout probabilities of $p=0.1$ prior to linear layers. At inference, $S=20$ stochastic forward passes were drawn. 
    \item \textbf{Evidential Networks:} The final layer utilized a \texttt{Softplus + 1} activation to emit Dirichlet concentration parameters ($\alpha$). The models were trained using the soft-label evidential Cross-Entropy loss without the optional KL-divergence regularizer ($\lambda_{KL} = 0$).
    \item \textbf{Deep Deterministic Uncertainty (DDU):} Spectral normalization was applied to all hidden layers with a Lipschitz coefficient of $3.0$, and activations were swapped to LeakyReLU. Following the initial Cross-Entropy training phase, a Gaussian Mixture Model (GMM) was fit to the spectrally-normalized pre-logit features to extract the marginal log-density.
\end{itemize}
\section{Detailed Results}
\begin{table}[!ht]
\centering
\footnotesize
\caption{\textbf{Benthic} (\texttt{benthic}). Results using ResNet-18 finetuned from ImageNet. Values averaged across 5 seeds. By $r_s$ we denote Spearman rank correlation.}
\label{tab:results-benthic}
\begin{tabular}{l l c c c c c c c}
\toprule
Method & Loss & {$\AuReC$} & {ex-$\AuReC$} & {$n$-$\AuReC$} & {$\AuRC$} & {P-gap} & {$r_s(\Ahat, \Ehat)$} & {$r_s(\Astar, \Estar)$} \\
\midrule
MC-Dropout & CE & $\underline{0.1991}$ & $0.1004$ & $\underline{0.631}$ & $\underline{0.3326}$ & $0.0859$ & $0.786$ & $0.371$ \\
 & $0/1$ & $0.0492$ & $0.0391$ & $0.727$ & $\underline{0.1312}$ & $0.0389$ & $0.526$ & $0.200$ \\
\addlinespace[2pt]
Evidential & CE & $0.2285$ & $\underline{0.0993}$ & $0.712$ & $0.3965$ & $\mathbf{0.0734}$ & $0.524$ & $0.328$ \\
 & $0/1$ & $\underline{0.0468}$ & $\underline{0.0377}$ & $\underline{0.726}$ & $0.1334$ & $\underline{0.0384}$ & $0.384$ & $0.190$ \\
\addlinespace[2pt]
Ensemble & CE & $\mathbf{0.1732}$ & $\mathbf{0.0887}$ & $\mathbf{0.602}$ & $\mathbf{0.3083}$ & $\underline{0.0855}$ & $0.806$ & $0.415$ \\
 & $0/1$ & $\mathbf{0.0314}$ & $\mathbf{0.0245}$ & $\mathbf{0.537}$ & $\mathbf{0.1061}$ & $\mathbf{0.0335}$ & $0.677$ & $0.206$ \\
\addlinespace[2pt]
DDU & CE & $0.2973$ & $0.2020$ & $1.079$ & $0.4876$ & $0.1198$ & $-0.267$ & $0.453$ \\
 & $0/1$ & $0.0665$ & $0.0567$ & $1.055$ & $0.1643$ & $0.0398$ & $-0.273$ & $0.184$ \\
\bottomrule
\end{tabular}
\end{table}

\begin{table}[!ht]
\centering
\footnotesize
\caption{\textbf{MiceBone} (\texttt{mice\_bone}). Results using ResNet-18 finetuned from ImageNet. Values averaged across 5 seeds. By $r_s$ we denote Spearman rank correlation.}
\label{tab:results-mice-bone}
\begin{tabular}{l l c c c c c c c}
\toprule
Method & Loss & {$\AuReC$} & {ex-$\AuReC$} & {$n$-$\AuReC$} & {$\AuRC$} & {P-gap} & {$r_s(\Ahat, \Ehat)$} & {$r_s(\Astar, \Estar)$} \\
\midrule
MC-Dropout & CE & $0.0814$ & $0.0471$ & $0.660$ & $0.2010$ & $0.0431$ & $0.690$ & $0.450$ \\
 & $0/1$ & $0.0306$ & $0.0274$ & $0.942$ & $0.1043$ & $0.0198$ & $0.410$ & $0.333$ \\
\addlinespace[2pt]
Evidential & CE & $\underline{0.0635}$ & $\mathbf{0.0273}$ & $\mathbf{0.501}$ & $\underline{0.1755}$ & $\mathbf{0.0260}$ & $0.825$ & $0.358$ \\
 & $0/1$ & $\mathbf{0.0154}$ & $\mathbf{0.0128}$ & $\mathbf{0.476}$ & $\mathbf{0.0667}$ & $\underline{0.0177}$ & $0.795$ & $0.330$ \\
\addlinespace[2pt]
Ensemble & CE & $\mathbf{0.0581}$ & $\underline{0.0284}$ & $\underline{0.517}$ & $\mathbf{0.1653}$ & $\underline{0.0273}$ & $0.816$ & $0.436$ \\
 & $0/1$ & $\underline{0.0161}$ & $\underline{0.0137}$ & $\underline{0.533}$ & $\underline{0.0781}$ & $\mathbf{0.0160}$ & $0.635$ & $0.337$ \\
\addlinespace[2pt]
DDU & CE & $0.1136$ & $0.0805$ & $1.169$ & $0.3062$ & $0.0424$ & $-0.513$ & $0.487$ \\
 & $0/1$ & $0.0391$ & $0.0360$ & $1.227$ & $0.1301$ & $0.0192$ & $-0.494$ & $0.314$ \\
\bottomrule
\end{tabular}
\end{table}

\begin{table}[!ht]
\centering
\footnotesize
\caption{\textbf{Pig} (\texttt{pig}). Results using ResNet-18 finetuned from ImageNet. Values averaged across 5 seeds. By $r_s$ we denote Spearman rank correlation.}
\label{tab:results-pig}
\begin{tabular}{l l c c c c c c c}
\toprule
Method & Loss & {$\AuReC$} & {ex-$\AuReC$} & {$n$-$\AuReC$} & {$\AuRC$} & {P-gap} & {$r_s(\Ahat, \Ehat)$} & {$r_s(\Astar, \Estar)$} \\
\midrule
MC-Dropout & CE & $0.1890$ & $0.0816$ & $0.955$ & $\underline{0.5345}$ & $0.0633$ & $0.329$ & $-0.133$ \\
 & $0/1$ & $\underline{0.0586}$ & $\underline{0.0426}$ & $\underline{0.801}$ & $\underline{0.2238}$ & $\underline{0.0503}$ & $0.658$ & $0.090$ \\
\addlinespace[2pt]
Evidential & CE & $\underline{0.1884}$ & $\underline{0.0753}$ & $\underline{0.874}$ & $0.5535$ & $\underline{0.0602}$ & $0.199$ & $-0.187$ \\
 & $0/1$ & $0.0663$ & $0.0494$ & $0.906$ & $0.2387$ & $0.0512$ & $0.106$ & $0.079$ \\
\addlinespace[2pt]
Ensemble & CE & $\mathbf{0.1638}$ & $\mathbf{0.0637}$ & $\mathbf{0.832}$ & $\mathbf{0.5133}$ & $\mathbf{0.0513}$ & $0.302$ & $-0.168$ \\
 & $0/1$ & $\mathbf{0.0469}$ & $\mathbf{0.0333}$ & $\mathbf{0.670}$ & $\mathbf{0.2068}$ & $\mathbf{0.0464}$ & $0.565$ & $0.099$ \\
\addlinespace[2pt]
DDU & CE & $0.1981$ & $0.0807$ & $0.879$ & $0.5720$ & $0.0715$ & $-0.015$ & $-0.109$ \\
 & $0/1$ & $0.0732$ & $0.0533$ & $0.910$ & $0.2498$ & $0.0556$ & $-0.028$ & $0.039$ \\
\bottomrule
\end{tabular}
\end{table}

\begin{table}[!ht]
\centering
\footnotesize
\caption{\textbf{Plankton} (\texttt{plankton}). Results using ResNet-18 finetuned from ImageNet. Values averaged across 5 seeds. By $r_s$ we denote Spearman rank correlation.}
\label{tab:results-plankton}
\begin{tabular}{l l c c c c c c c}
\toprule
Method & Loss & {$\AuReC$} & {ex-$\AuReC$} & {$n$-$\AuReC$} & {$\AuRC$} & {P-gap} & {$r_s(\Ahat, \Ehat)$} & {$r_s(\Astar, \Estar)$} \\
\midrule
MC-Dropout & CE & $\underline{0.0499}$ & $0.0306$ & $\underline{0.403}$ & $\underline{0.0872}$ & $0.0401$ & $0.975$ & $0.613$ \\
 & $0/1$ & $0.0174$ & $0.0164$ & $0.840$ & $0.0522$ & $0.0101$ & $0.280$ & $0.322$ \\
\addlinespace[2pt]
Evidential & CE & $0.0791$ & $\underline{0.0274}$ & $0.440$ & $0.1305$ & $\mathbf{0.0254}$ & $0.858$ & $0.537$ \\
 & $0/1$ & $\underline{0.0084}$ & $\underline{0.0076}$ & $\underline{0.438}$ & $\mathbf{0.0298}$ & $0.0094$ & $0.836$ & $0.325$ \\
\addlinespace[2pt]
Ensemble & CE & $\mathbf{0.0393}$ & $\mathbf{0.0240}$ & $\mathbf{0.396}$ & $\mathbf{0.0767}$ & $\underline{0.0316}$ & $0.949$ & $0.629$ \\
 & $0/1$ & $\mathbf{0.0063}$ & $\mathbf{0.0057}$ & $\mathbf{0.416}$ & $\underline{0.0333}$ & $\mathbf{0.0076}$ & $0.485$ & $0.328$ \\
\addlinespace[2pt]
DDU & CE & $0.0760$ & $0.0596$ & $0.833$ & $0.1533$ & $0.0426$ & $0.128$ & $0.654$ \\
 & $0/1$ & $0.0137$ & $0.0130$ & $0.804$ & $0.0457$ & $\underline{0.0088}$ & $0.125$ & $0.332$ \\
\bottomrule
\end{tabular}
\end{table}

\begin{table}[!ht]
\centering
\footnotesize
\caption{\textbf{QualityMRI} (\texttt{quality\_mri}). Results using ResNet-18 finetuned from ImageNet. Values averaged across 5 seeds. By $r_s$ we denote Spearman rank correlation.}
\label{tab:results-quality-mri}
\begin{tabular}{l l c c c c c c c}
\toprule
Method & Loss & {$\AuReC$} & {ex-$\AuReC$} & {$n$-$\AuReC$} & {$\AuRC$} & {P-gap} & {$r_s(\Ahat, \Ehat)$} & {$r_s(\Astar, \Estar)$} \\
\midrule
MC-Dropout & CE & $0.0166$ & $0.0114$ & $1.084$ & $0.2951$ & $0.0111$ & $-0.041$ & $-0.089$ \\
 & $0/1$ & $\underline{0.0055}$ & $\underline{0.0050}$ & $0.780$ & $\mathbf{0.1395}$ & $\underline{0.0105}$ & $0.668$ & $0.304$ \\
\addlinespace[2pt]
Evidential & CE & $\mathbf{0.0122}$ & $\mathbf{0.0070}$ & $\mathbf{0.771}$ & $\mathbf{0.2866}$ & $\mathbf{0.0064}$ & $0.143$ & $-0.240$ \\
 & $0/1$ & $0.0061$ & $0.0055$ & $\underline{0.754}$ & $0.1451$ & $\mathbf{0.0082}$ & $0.143$ & $0.341$ \\
\addlinespace[2pt]
Ensemble & CE & $\underline{0.0152}$ & $\underline{0.0094}$ & $1.012$ & $\underline{0.2875}$ & $\underline{0.0083}$ & $0.100$ & $-0.142$ \\
 & $0/1$ & $\mathbf{0.0052}$ & $\mathbf{0.0043}$ & $\mathbf{0.381}$ & $\underline{0.1400}$ & $0.0107$ & $0.608$ & $0.361$ \\
\addlinespace[2pt]
DDU & CE & $0.0198$ & $0.0121$ & $\underline{0.905}$ & $0.2983$ & $0.0153$ & $-0.149$ & $-0.063$ \\
 & $0/1$ & $0.0141$ & $0.0127$ & $0.864$ & $0.1559$ & $0.0155$ & $-0.149$ & $0.277$ \\
\bottomrule
\end{tabular}
\end{table}

\begin{table}[!ht]
\centering
\footnotesize
\caption{\textbf{Treeversity-1} (\texttt{treeversity\_1}). Results using ResNet-18 finetuned from ImageNet. Values averaged across 5 seeds. By $r_s$ we denote Spearman rank correlation.}
\label{tab:results-treeversity-1}
\begin{tabular}{l l c c c c c c c}
\toprule
Method & Loss & {$\AuReC$} & {ex-$\AuReC$} & {$n$-$\AuReC$} & {$\AuRC$} & {P-gap} & {$r_s(\Ahat, \Ehat)$} & {$r_s(\Astar, \Estar)$} \\
\midrule
MC-Dropout & CE & $0.1080$ & $0.0576$ & $0.497$ & $\underline{0.1974}$ & $0.0601$ & $0.902$ & $0.531$ \\
 & $0/1$ & $0.0340$ & $0.0293$ & $0.741$ & $0.0807$ & $0.0231$ & $0.422$ & $0.307$ \\
\addlinespace[2pt]
Evidential & CE & $\underline{0.1049}$ & $\underline{0.0429}$ & $\underline{0.478}$ & $0.2183$ & $\mathbf{0.0367}$ & $0.681$ & $0.464$ \\
 & $0/1$ & $\underline{0.0217}$ & $\underline{0.0182}$ & $\underline{0.539}$ & $\underline{0.0697}$ & $\underline{0.0198}$ & $0.628$ & $0.331$ \\
\addlinespace[2pt]
Ensemble & CE & $\mathbf{0.0797}$ & $\mathbf{0.0383}$ & $\mathbf{0.421}$ & $\mathbf{0.1680}$ & $\underline{0.0423}$ & $0.889$ & $0.553$ \\
 & $0/1$ & $\mathbf{0.0182}$ & $\mathbf{0.0153}$ & $\mathbf{0.495}$ & $\mathbf{0.0596}$ & $\mathbf{0.0177}$ & $0.603$ & $0.339$ \\
\addlinespace[2pt]
DDU & CE & $0.1490$ & $0.1045$ & $0.926$ & $0.2977$ & $0.0640$ & $-0.159$ & $0.597$ \\
 & $0/1$ & $0.0364$ & $0.0326$ & $0.918$ & $0.1003$ & $0.0204$ & $-0.156$ & $0.327$ \\
\bottomrule
\end{tabular}
\end{table}

\begin{table}[!ht]
\centering
\footnotesize
\caption{\textbf{Treeversity-6} (\texttt{treeversity\_6}). Results using ResNet-18 finetuned from ImageNet. Values averaged across 5 seeds. By $r_s$ we denote Spearman rank correlation.}
\label{tab:results-treeversity-6}
\begin{tabular}{l l c c c c c c c}
\toprule
Method & Loss & {$\AuReC$} & {ex-$\AuReC$} & {$n$-$\AuReC$} & {$\AuRC$} & {P-gap} & {$r_s(\Ahat, \Ehat)$} & {$r_s(\Astar, \Estar)$} \\
\midrule
MC-Dropout & CE & $0.1027$ & $0.0536$ & $0.841$ & $\underline{0.4568}$ & $0.0418$ & $0.235$ & $0.163$ \\
 & $0/1$ & $0.0165$ & $0.0144$ & $0.684$ & $\underline{0.1727}$ & $0.0202$ & $0.537$ & $0.280$ \\
\addlinespace[2pt]
Evidential & CE & $\underline{0.0885}$ & $\underline{0.0324}$ & $\underline{0.608}$ & $0.4914$ & $\mathbf{0.0299}$ & $0.085$ & $0.080$ \\
 & $0/1$ & $\underline{0.0149}$ & $\underline{0.0131}$ & $\underline{0.678}$ & $0.1969$ & $\underline{0.0187}$ & $-0.030$ & $0.303$ \\
\addlinespace[2pt]
Ensemble & CE & $\mathbf{0.0770}$ & $\mathbf{0.0304}$ & $\mathbf{0.562}$ & $\mathbf{0.4230}$ & $\underline{0.0310}$ & $0.485$ & $0.188$ \\
 & $0/1$ & $\mathbf{0.0115}$ & $\mathbf{0.0097}$ & $\mathbf{0.507}$ & $\mathbf{0.1640}$ & $\mathbf{0.0181}$ & $0.593$ & $0.304$ \\
\addlinespace[2pt]
DDU & CE & $0.1030$ & $0.0511$ & $0.803$ & $0.5308$ & $0.0460$ & $-0.386$ & $0.234$ \\
 & $0/1$ & $0.0204$ & $0.0179$ & $0.808$ & $0.2126$ & $0.0217$ & $-0.362$ & $0.306$ \\
\bottomrule
\end{tabular}
\end{table}

\begin{table}[!ht]
\centering
\footnotesize
\caption{\textbf{Turkey} (\texttt{turkey}). Results using ResNet-18 finetuned from ImageNet. Values averaged across 5 seeds. By $r_s$ we denote Spearman rank correlation.}
\label{tab:results-turkey}
\begin{tabular}{l l c c c c c c c}
\toprule
Method & Loss & {$\AuReC$} & {ex-$\AuReC$} & {$n$-$\AuReC$} & {$\AuRC$} & {P-gap} & {$r_s(\Ahat, \Ehat)$} & {$r_s(\Astar, \Estar)$} \\
\midrule
MC-Dropout & CE & $\underline{0.0574}$ & $0.0295$ & $0.432$ & $\mathbf{0.1169}$ & $0.0374$ & $0.935$ & $0.430$ \\
 & $0/1$ & $0.0277$ & $0.0250$ & $0.838$ & $0.0689$ & $\underline{0.0168}$ & $0.375$ & $0.300$ \\
\addlinespace[2pt]
Evidential & CE & $0.0677$ & $\underline{0.0292}$ & $\mathbf{0.426}$ & $0.1369$ & $\underline{0.0319}$ & $0.851$ & $0.325$ \\
 & $0/1$ & $\underline{0.0164}$ & $\underline{0.0136}$ & $\mathbf{0.417}$ & $\mathbf{0.0457}$ & $0.0173$ & $0.828$ & $0.311$ \\
\addlinespace[2pt]
Ensemble & CE & $\mathbf{0.0564}$ & $\mathbf{0.0272}$ & $\underline{0.428}$ & $\underline{0.1211}$ & $\mathbf{0.0310}$ & $0.865$ & $0.414$ \\
 & $0/1$ & $\mathbf{0.0140}$ & $\mathbf{0.0120}$ & $\underline{0.474}$ & $\underline{0.0502}$ & $\mathbf{0.0156}$ & $0.583$ & $0.340$ \\
\addlinespace[2pt]
DDU & CE & $0.1217$ & $0.0933$ & $1.143$ & $0.2421$ & $0.0563$ & $-0.396$ & $0.523$ \\
 & $0/1$ & $0.0358$ & $0.0335$ & $1.236$ & $0.0886$ & $0.0176$ & $-0.391$ & $0.348$ \\
\bottomrule
\end{tabular}
\end{table}

%\newpage
%\input{checklist}
\end{document}